\title{Pointwise confidence estimation in the non-linear $\ell^2$-regularized least squares}
\author[1]{Ilja Kuzborskij}
\author[2]{Yasin Abbasi Yadkori}
\affil[1]{Google DeepMind}
\affil[2]{Sapient Intelligence}
\date{} 
\def\ddefloop#1{\ifx\ddefloop#1\else\ddef{#1}\expandafter\ddefloop\fi}
\def\ddef#1{\expandafter\def\csname c#1\endcsname{\ensuremath{\mathcal{#1}}}}
\def\ddef#1{\expandafter\def\csname b#1\endcsname{\ensuremath{{\boldsymbol{#1}}}}}
\def\ddef#1{\expandafter\def\csname h#1\endcsname{\ensuremath{\hat{#1}}}}
\def\ddef#1{\expandafter\def\csname hc#1\endcsname{\ensuremath{\widehat{\mathcal{#1}}}}}
\def\ddef#1{\expandafter\def\csname bar#1\endcsname{\ensuremath{\bar{#1}}}}
\def\ddef#1{\expandafter\def\csname wbar#1\endcsname{\ensuremath{\overline{#1}}}}
\def\ddef#1{\expandafter\def\csname tc#1\endcsname{\ensuremath{\widetilde{\mathcal{#1}}}}}
\DeclareMathOperator*{\argmin}{arg\,min}
\DeclareMathOperator{\E}{\mathbb{E}}
\DeclareMathOperator{\Var}{\mathrm{Var}}
\newcommand{\ind}{\mathbf{1}}
\renewcommand{\P}{\mathbb{P}}
\newcommand{\R}{\mathbb{R}}
\newcommand{\tp}{^{\top}}
\newcommand{\ip}[1]{\left\langle #1 \right\rangle}
\newcommand{\bmid}{\;\middle|\;}
\newcommand*\diff{\mathop{}\!\mathrm{d}}
\newcommand{\pr}[1]{\left( #1 \right)}
\renewcommand{\br}[1]{\left[ #1 \right]}
\newcommand{\cbr}[1]{\left\{ #1 \right\}}
\newcommand{\abs}[1]{\left|#1\right|}
\newcommand{\ve}{\varepsilon}
\renewcommand{\th}{\theta}
\newcommand{\repi}{^{(i)}}
\newcommand{\deli}{^{\backslash i}}
\newcommand{\sumin}{\sum_{i=1}^n}
\newcommand{\diag}{\mathrm{diag}}
\renewcommand{\vec}{\mathrm{vec}}
\renewcommand{\d}{\diff}
\newcommand{\op}{\mathrm{op}}
\newcommand{\xtest}{x}
\newcommand{\Lip}{\mathrm{Lip}}
\newcommand{\lmin}{\lambda_{\min}}
\newtheorem{lemma}{Lemma}
\newtheorem{theorem}{Theorem}
\newtheorem{proposition}{Proposition}
\newtheorem{definition}{Definition}
\newtheoremstyle{named}{}{}{\itshape}{}{\bfseries}{}{.5em}{\thmnote{#3.}#1}
\theoremstyle{named}
\begin{document}

\maketitle

\begin{abstract}
  We consider a high-probability non-asymptotic confidence estimation in the $\ell^2$-regularized non-linear least-squares setting with fixed design.
  In particular, we study confidence estimation for local minimizers of the regularized training loss.
  We show a \emph{pointwise} confidence bound, meaning that it holds for the prediction on any given fixed test input $\xtest$.
  Importantly, the proposed confidence bound
  scales with similarity of the test input to the training data in the implicit feature space of the predictor (for instance, becoming very large when the test input lies far outside of the training data).
  This desirable last feature is captured by the weighted norm involving the inverse-Hessian matrix of the objective function, which is a generalized version of its counterpart in the linear setting,
  $\xtest\tp \text{Cov}^{-1} \xtest$. 
  Our generalized result can be regarded as a non-asymptotic counterpart of the classical confidence interval based on asymptotic normality of the MLE estimator.
  We propose an efficient method for computing the weighted norm, which only mildly exceeds the cost of a gradient computation of the loss function.
  Finally, we complement our analysis with empirical evidence showing that the proposed confidence bound provides better coverage/width trade-off compared to a confidence estimation by bootstrapping, which is a gold-standard method in many applications involving non-linear predictors such as neural networks.
\end{abstract}

\section{Introduction}
\label{sec:intro}

One of the fundamental problems in statistics and machine learning is uncertainty quantification for some unknown ground truth function, such as constructing a high-probability confidence interval for a \emph{fixed} function $f(\xtest; \th^{\star})$ that is parameterized by a vector $\th^{\star}$ and evaluated at a \emph{fixed} point $\xtest$, when all we have is a collection of noisy observations $f(x_i; \th^{\star}) + \mathrm{noise}_i$ at \emph{fixed} training inputs $x_1, \ldots, x_n$. Given an uncertainty estimate, the practitioner might decide to collect additional data in regions of high uncertainty or abstain from using the model if the model is deemed too unreliable.

The construction of confidence intervals (or, more generally, confidence sets) is a long-lived topic of interest at the heart of statistics ~\citep{van2000asymptotic,shao2003mathematical,wasserman2004all}.
At the same time, confidence intervals fall in contrast with the de facto approach in machine learning literature, where one obtains a \emph{single} estimate of $f(\xtest; \th^{\star})$  based on the loss minimization~\citep{shalev2014understanding}.
While it is easy to estimate the out-of-sample performance of such standard learning approaches under the i.i.d.\ assumption (say, using a held-out sample), the standard approach often falls short of identifying the limits of our knowledge about $f(\xtest; \th^{\star})$ for an \emph{arbitrary} input point $\xtest$. This problem is particularly relevant when the input point $\xtest$ is far from the training data.

A notable success is a linear regression setting where the ground truth is given by $f(x; \th^{\star}) = x\tp \th^{\star}$, and where the solution to a (regularized) linear least-squares problem $\hat \th$ can be used to state a \emph{high-probability} confidence interval of a form
$|{\xtest}\tp \hat \th - {\xtest}\tp \th^{\star}|^2 \lesssim {\xtest}\tp \Sigma^{-1} \xtest / n$
where $\Sigma$ is a covariance matrix.
Here an important term is the
\emph{weighted norm} ${\xtest}\tp \Sigma^{-1} {\xtest}$ that captures how well the test input $\xtest$ is
represented in the span of training inputs, and so if $\xtest$ is very dissimilar from the training data, the norm will be large.
Similar confidence intervals can also be stated in the non-linear case by lifting derivations to the formalism of kernel methods~\citep{RasmussenW06,Srinivas2010,Yadkori2012,chowdhury2017kernelized}.

In this paper we focus on uncertainty estimation of \emph{non-linear} ground truth functions which can be modelled by highly non-linear predictors, such as overparameterized neural networks.
While a large body of recent literature has explored the practical adaptation of ideas from statistics to neural networks, such as bootstrap and ensemble methods~\citep{dwaracherla2022ensembles}, these tend to require strong assumptions on the distribution of the test input point $\xtest$ -- usually that it is an i.i.d. sample from the same distribution as the training set.
Another limitation of these approaches is computational:
to control their variance, a large number of dataset replications and re-training steps are required, which is impractical in the context of such predictors.

In contrast to these approaches, in this paper we are motivated by the presence of the weighted norm beyond the linear case.
Intuitively, one would expect that in the non-linear case the weighted norm should bear a similar meaning as in the linear case, but now capturing whether the test point resides in the span of some implicit `features' learned by the algorithm.
Moreover, we ask whether one can attain high-probability non-asymptotic confidence intervals that would be shaped by this kind of weighted norm. To this end, the concept of weighted norms in this context is not new.
A fundamental result in parametric estimation going back to works of R.~Fisher is the asymptotic normality of the \ac{MLE} $\hat \th$, namely
$\sqrt{n} \, (\hat \th - \th^{\star}) \overset{d}{\longrightarrow} \cN(0, \cI(\th^{\star})^{-1})$,
where the
ground truth is parameterized by a fixed vector $\th^{\star}$
and
where $\cI(\th^{\star})$ is a \emph{Fisher information matrix} \citep{van2000asymptotic}.
This result is used to give various \emph{asymptotic} confidence sets for $\th^{\star}$, such as Wald-type elliptic sets and a so-called linearized Laplace approximation in the context of Bayesian literature~\citep{Mackay1992Thesis,antoran2022adapting} (see \Cref{sec:related} for a detailed discussion of related work).
Constructing 
confidence intervals on $f(\xtest; \th^{\star})$ based on asymptotic normality would indeed involve a generalized weighted norm defined with respect to the Fisher information matrix.
However, such methods occasionally suffer from coverage issues due to their asymptotic nature and a lack of high-probability guarantee, especially when the sample is small.
In this paper we explore this problem from the viewpoint of concentration inequalities
by focusing on the \emph{point variance} of the estimator:
\begin{align}
  \label{eq:ci-decomp}
  |f(\xtest; \hat \th) - f(\xtest; \th^{\star})|
  \leq
  \underbrace{
  |f(\xtest; \hat \th) - \E[f(\xtest; \hat \th)]|
  }_{\text{\color{blue} Point variance}}
  +
  \underbrace{|\E[f(\xtest; \hat \th)] - f(\xtest; \th^{\star})|}_{\text{Bias}}~.
\end{align}
Note that the point variance is a \emph{random} part of the confidence interval, while
we do not consider here a \emph{deterministic} \emph{bias} of the estimator, controlling which is typically a hard problem on its own requiring assumptions about the regularity of the ground truth function~\citep{gyorfi2006distribution}.
\paragraph{Our contributions}
In this work we show a high-probability non-asymptotic confidence bound that holds generally in the $\ell^2$-regularized non-linear least-squares setting with fixed design (\Cref{thm:ci-hp-ls}).
The only requirement that we impose on the solution $\hat \th$ is to be a local minimizer of this (differentiable) problem.
Note that this is a very general setting where $f$ might be a highly non-linear predictor, such as a neural network. %
In particular, we show that with high probability
\begin{align*}
  |f(\xtest; \hat \th) - \E[f(\xtest; \hat \th)]| = \cO_P\pr{
  \frac{\|\nabla_{\th} f(\xtest; \hat \th)\|_{M(\hat \th)}}{\sqrt{n}}
  + \frac{1}{n}
  }
  \qquad \text{as} \qquad n \to \infty
\end{align*}
where $M(\th)$ is a particular \ac{PSD} empirical weighting matrix that asymptotically converges to the Fisher information matrix.
In \Cref{sec:results} we study this bound in various settings and show that it recovers known results for the linear setting,
confidence bounds with weighted norms,
and that it is asymptotically correct, meaning that it matches the shape of the confidence interval based on asymptotic normality.

Next, we propose an efficient algorithm for computation of the weighted norm, which exploits automatic differentiation techniques to attain the worst-case computational complexity of order $\tilde \cO(p n)$ in contrast to a naive computation of order $\cO(p^3 + p n)$
where $p$ is the number of parameters and $n$ is the sample size.
Meanwhile, the memory complexity is of the order $\cO(p)$ instead of $\cO(p^2)$. 
In the interpolation regime, the computational cost is improved to $\max(p, C_L)$, where $C_L$ is the cost of evaluating the loss function. 

Finally, we evaluate our bound experimentally in \Cref{sec:experiments} and show that it is able to attain a good coverage / width trade-off on a non-linear problem even in the presence of regions of the input space without observations.
In comparison, we show that bootstrapping, as a representative often used go-to method, falls short in terms of coverage.

\section{Preliminaries}
\label{sec:preliminaries}
\paragraph{Definitions}
Throughout, $\|\cdot\|$ is understood as the Euclidean norm for vectors and the Frobenius norm for matrices.
For some vector $x \in \R^p$ and a \ac{PSD} matrix $A \in \R^{p \times p}$, the weighted Euclidean norm is defined as
$\|x\|_A^2 = x\tp A x$.
We will use a shorthand notation for gradient evaluation at a point, $\nabla_\th f(\dots, \th', \dots) = (\d f/\d \th) |_{\th=\th'}$, e.g., $\nabla_\th f(\dots, \hat \th, \dots)$, $\nabla_x f(\dots, x_i, \dots)$.

We will also work with unbounded random variables which are often captured through the following tail conditions (see also \citep[Section 2.4]{boucheron2013concentration}).
\begin{definition}[Sub-gamma random variable]
  A random variable is called sub-gamma on the right tail with variance factor $v^2$ and scale $c > 0$ if
  \begin{align*}
    \ln \E[e^{\lambda (X - \E[X])}] \leq \frac{\lambda^2 (v^2/2)}{1-c \lambda} \qquad \text{for every} \quad 0 < \lambda < 1/c~.    
  \end{align*}
  Similarly, $X$ is called sub-gamma on the left tail if $-X$ is sub-gamma on the right tail with parameters $(v^2, c)$.
  Finally, $X$ is called sub-gamma if conditions on both tails hold simultaneously.
\end{definition}
One intuitive consequence of a sub-gamma condition is that it implies tail bounds
$\P\{X - \E[X] > \sqrt{2 v^2 t} + c t\} \leq e^{-t}$ and $\P\{\E[X] - X > \sqrt{2 v^2 t} + c t\} \leq e^{-t}$,
which tells us that depending on its parameters, it might concentrate around its mean at rates ranging from
$\sqrt{2 v^2 t}$ to  $c t$. Note that when $|X|$ is bounded by $B$, we have $v^2 = B^2/4$ and $c=0$.
Sub-gamma condition on the tail is weaker than sub-gaussianity, but stronger than assuming that the variance of a random variable is bounded.

\subsection{Setting}
In the following we will look at the parameterized predictors $f : \R^d \times \R^p \to \R$;
while $f(x; \th)$ will stand for evaluation on an input $x \in \R^d$ given parameters $\th \in \R^p$.
In practice, parameters $\hat \th$ are found given training examples $(x_1, Y_1), \ldots, (x_n, Y_n) \in \R^d \times \R$.
In this paper, we work in a \emph{fixed-design regression setting}
where inputs $x_1, \ldots, x_n$ are fixed, while responses $Y_1, \ldots, Y_n$ are random.
In particular, each response is generated as
\begin{align}
  \label{eq:regression}
  Y_i = f(x_i; \th^{\star}) + Z_i~, \qquad Z_i \sim \cN(0, \sigma^2)~.
\end{align}
where $f(\cdot \, ; \th^{\star}) : \R^d \to \R$ is some unknown fixed regression function.
Then, the \emph{empirical risk} of $\th$ with respect to the square loss and its $\ell^2$-regularized counterpart are respectively defined as
\begin{align*}
  L(\th) = \frac{1}{2 n} \sumin (f(x_i; \th) - Y_i)^2~,
  \qquad
  L_{\lambda}(\th) = L(\th) + \frac{\lambda}{2} \|\th\|^2~.
\end{align*}
Given a \emph{fixed} test point $\xtest$ our goal in this paper is to estimate the \emph{point variance},
which we will see as a proxy to uncertainty within the model arising due to randomness in responses,
and which is an
important part in a general pointwise uncertainty estimation problem, \cref{eq:ci-decomp}.
In this paper, we do not control a (deterministic) bias of a learning algorithm, unless in some sanity-check
scenarios (such as when $f(\xtest; \th^{\star})$ is linear).
Controlling the bias typically requires smoothness assumptions on the regression function $f(\xtest; \th^{\star})$ (such as membership in a certain Sobolev space), which in turn requires to show that the algorithm is able to learn well in very large (non-parametric) classes of functions.
As a consequence, such bounds are often quite pessimistic and involve a form of the `curse-of-dimensionality'~\citep{gyorfi2006distribution}.
Many gold-standard confidence estimation methods, such as the ones derived from bootstrap~\citep{efron1994introduction}, forego bias control.

In particular, in this paper we will study point variance estimation using $x \mapsto f(x \,; \hat \th)$, where
parameter vector $\hat \th$ is assumed to be a local minimizer of $L_{\lambda}$.
Moreover, we require $L_{\lambda}$ to be differentiable in $\th$.
This implies that $\hat \th$ is a stationary point of $L_{\lambda}$, or in other words, $\nabla L_{\lambda}(\hat \th) = 0$.
In turn, this implies an important property of \emph{alignment} beween the parameter vector and the gradient:
\begin{align}
  \label{lem:alignment}
  \lambda \hat \th = - \nabla L(\hat \th)~.
\end{align}
Note that the stationarity is asymptotically achieved (and so is alignment) under appropriate smoothness and boundedness conditions by many optimization algorithms used in practice, such as \ac{GD} (and its stochastic counterpart) \citep{GhadimiL13}, as well as adaptive ones such as Adam/Adagrad~\citep{defossez2022simple,li2023convergence}, and AdamW~\citep{zhou2024towards}.
As an example, we show the alignment property for \ac{GD} in \Cref{sec:app:preliminaries}.
\section{Main results}
\label{sec:results}
Our first result is a high-probability concentration inequality on the
point variance gap
\begin{align*}
  \Delta(\xtest; \hat \th) = f(\xtest; \hat \th) - \E[f(\xtest; \hat \th)]~.
\end{align*}
Here we assume that $\hat \th$ is any local minimizer of $L_{\lambda}$, and we require the bound on the gap to be given
in terms of a data-dependent quantity that captures sensitivity of the predictor to new observations.
In the following, we will often refer to this quantity as the \emph{weighted norm} with respect to $(\hat \th, \xtest)$, and which is given by $\|\nabla_\th f(\xtest; \hat \th)\|^2_{M(\hat \th)}$ where we define a \ac{PSD} matrix
\begin{align*}
  M(\th) = \pr{\nabla^2 L(\th) + \lambda I}^{-1} \pr{\frac1n \sumin \nabla_\th f(x_i; \th) \nabla_\th f(x_i; \th)\tp} \pr{\nabla^2 L(\th) + \lambda I}^{-1}~.
\end{align*}
Note that the weighted norm is random only in the responses $Y_1, \ldots, Y_n$, and moreover it can potentially be unbounded.
To capture the fact that it is unbounded in our analysis, it is natural to assume standard tail conditions for such a scenario.
In particular, here we will assume that the weighted norm is a \emph{sub-gamma} random variable with variance factor $v^2$ and scale $c > 0$, which captures many cases of bounded and unbounded random variables (see \Cref{sec:preliminaries} for definition and discussion).

Now we present our main result, which shows that the point variance gap is controlled in terms of the weighted norm (the proof is given in \Cref{sec:proofs}):\footnote{
  One can also show a weaker result (with dependence $1/\sqrt{\delta}$ rather than $\ln(1/\delta)$), which does not require any tail assumptions on the weighted norm at all (see \Cref{sec:poly-rates}). 
}
\begin{theorem}
  \label{thm:ci-hp-ls}
  Assume that $L_{\lambda}$ is differentiable and that $\hat \th$ is its local minimizer.
  Assume that the centered weighted norm with respect to $(\hat \th, \xtest)$
is a sub-gamma random variable with variance factor $v^2$ and scale $c$.
    Then, for any fixed test input $\xtest$ and for any $\delta \in (0,1)$ \begin{align*}
      \P\cbr{\abs{\Delta(\xtest; \hat \th)} >
        \sigma \sqrt{\frac{(\pi^2/2) \ln\pr{2/\delta}}{n} \E\br{\|\nabla_\th f(\xtest; \hat \th)\|^2_{M(\hat \th)}} }
      +
        \sigma^2 \, \frac{\sqrt{2 \ln\pr{2/\delta}} \, v + (2/3) \ln\pr{2/\delta} c}{n}
      } \leq \delta~.
    \end{align*}
  \end{theorem}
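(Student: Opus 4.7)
The plan is to treat $F(Y) := f(\xtest; \hat\th(Y))$ as a smooth function of the Gaussian response vector $Y$ (with $Y_i \sim \cN(f(x_i;\th^{\star}), \sigma^2)$ independent), then control its MGF by a Gaussian interpolation argument and invert via Chernoff.

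\textbf{Step 1 (Sensitivity of $\hat\th$).} First I would apply the implicit function theorem to the stationarity condition $\nabla L_{\lambda}(\hat\th) = 0$. Since $\nabla^2 L_{\lambda}(\hat\th) = \nabla^2 L(\hat\th) + \lambda I$ is invertible at a non-degenerate local minimizer, differentiating through yields $\partial_{Y_i} \hat\th = \frac{1}{n}(\nabla^2 L_{\lambda}(\hat\th))^{-1}\nabla_\th f(x_i;\hat\th)$. Chain rule and a short calculation then give the key identity $\|\nabla_Y F(Y)\|^2 = \frac{1}{n}\|\nabla_\th f(\xtest;\hat\th)\|^2_{M(\hat\th)}$, tying the Euclidean gradient of $F$ in $Y$ to the weighted norm in the statement.

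\textbf{Step 2 (Gaussian interpolation).} I would then apply the Maurey--Pisier Gaussian interpolation MGF bound: for smooth $F:\R^n\to\R$ and $Y$ with Gaussian coordinates of variance $\sigma^2$,
\begin{align*}
  \ln \E\br{e^{\lambda(F-\E F)}} \leq \ln \E\br{\exp\pr{\frac{\pi^2 \sigma^2 \lambda^2}{8} \|\nabla F(Y)\|^2}}~.
\end{align*}
Its standard derivation introduces an independent copy $Y'$, uses the interpolation $Y_t = Y \sin(\pi t /2) + Y' \cos(\pi t/2)$ (for which $\dot Y_t \mid Y_t \sim \cN(0, (\pi \sigma/2)^2 I)$ independently of $Y_t$), and applies Jensen to $F(Y) - F(Y') = \int_0^1 \nabla F(Y_t)^{\top} \dot Y_t\,dt$. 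Combining with Step 1 and setting $U := \|\nabla_\th f(\xtest;\hat\th)\|^2_{M(\hat\th)}$ and $\alpha_\lambda := \pi^2 \sigma^2 \lambda^2/(8n)$, this becomes $\ln \E[e^{\lambda(F-\E F)}] \leq \ln \E[e^{\alpha_\lambda U}]$.

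\textbf{Step 3 (Chernoff inversion).} Finally I would invoke the sub-gamma assumption to bound $\ln \E[e^{\alpha_\lambda U}] \leq \alpha_\lambda \E U + \frac{\alpha_\lambda^2 v^2/2}{1-\alpha_\lambda c}$, valid for $\alpha_\lambda c < 1$. Chernoff optimization using only the first term is sub-Gaussian in $\lambda$ and reproduces the leading order $\sigma\sqrt{(\pi^2/2)\E U \ln(2/\delta)/n}$. The second term contributes a sub-gamma correction of order $\sigma^2/n$ in $v,c$. To extract the clean additive form, I would split $t = t_1 + t_2$ and pick $\lambda$ to balance the sub-Gaussian piece against $t_1$. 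A union bound over upper and lower tails (at $\delta/2$ each) yields the two-sided $|\Delta(\xtest;\hat\th)|$ statement with $\ln(2/\delta)$.

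\textbf{Main obstacle.} The trickiest step is the Chernoff inversion: the composite MGF mixes sub-Gaussian ($\lambda^2$) and sub-gamma (effectively $\lambda^4$) contributions, and recovering the stated clean additive deviation with precise constants requires care. The $(2/3)$ factor in the correction hints at sharpening the sub-gamma MGF to the Bennett form $\ln \E[e^{s(U-\E U)}] \leq (s^2 v^2/2)/(1-sc/3)$ before optimizing $\lambda$. A secondary technical point is the local smoothness of $Y \mapsto \hat\th(Y)$, handled by the non-degeneracy of $\nabla^2 L_{\lambda}(\hat\th)$ at a strict local minimizer.
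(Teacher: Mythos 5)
Your proposal is correct and follows essentially the same route as the paper: implicit differentiation of the stationarity condition $\lambda\hat\th = -\nabla L(\hat\th)$ to identify $\|\nabla_Y F\|^2$ with the weighted norm (the paper's Lemma~\ref{lem:norm2norm}), the Pisier Gaussian-interpolation MGF bound (Lemma~\ref{lem:F-concentration}), and a sub-gamma Chernoff inversion (Lemma~\ref{lem:F-concentration-subgamma}, where the paper obtains the $2/3$ factor via the Boucheron--Lugosi--Massart inversion lemma rather than the Bennett-form MGF you suggest, a cosmetic difference).
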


  Note that an empirical bound can be stated from the above as a consequence of the sub-gamma assumption, where the expected weighted norm is replaced by
$\|\nabla_{\th} f(\xtest; \hat \th)\|_{M(\hat \th)}^2 + \sqrt{2 v^2 \ln(2/\delta)} + c \ln(2/\delta)$.
Next, we discuss some implications and analyze \Cref{thm:ci-hp-ls} in various settings.
\paragraph{Sanity check for linear least-squares}
First, as a sanity check, consider a linear model where $f(\xtest; \th^{\star}) = x\tp \theta^\star$ for some fixed ground-truth parameter vector $\theta^\star$.
Then $f(x; \th) = x\tp \th$ and we employ a well-known ridge regression estimator
\begin{align*}
  \hat \th = \Sigma^{-1}\pr{\frac1n \sumin x_i Y_i} \qquad \text{where} \qquad \Sigma = \frac1n \sumin x_i x_i\tp + \lambda I~.
\end{align*}
Now, it is straightforward to check that (see \Cref{sec:linear-least-squares})
\begin{align*}
\E[f(\xtest; \hat \th)]
= \xtest\tp \th^{\star} - \lambda \, \xtest\tp \Sigma^{-1} \th^{\star}
\qquad\text{and}\qquad
  \E\br{\|\nabla_\th f(\xtest; \hat \th)\|^2_{M(\hat \th)}} \leq \|\xtest\|_{\Sigma^{-1}}^2
\end{align*}
and so we recover a confidence interval for such a case:
\begin{align*}
  \P\cbr{|\xtest\tp (\hat \th - \th^{\star})| > \lambda \abs{ \xtest\tp \Sigma^{-1} \th^{\star}}
  +
    \sigma \sqrt{\frac{(\pi^2/2) \ln\pr{2/\delta}}{n}} \, \|\xtest\|_{\Sigma^{-1}}
} \leq \delta~.
\end{align*}
This confidence interval, importantly, scales with the weighted norm $\|\xtest\|_{\Sigma^{-1}}$,
has a bias term $\lambda \abs{ \xtest\tp \Sigma^{-1} \th^{\star}}$,
and matches known high-probability confidence intervals for ridge regression (with slightly worse constant), see for instance~\citep[Chap. 20]{LattimoreS18}.
The dependence on this weighted norm is optimal in the fixed and adaptive design case~\citep{lattimore2023lower}.
\paragraph{Relationship of matrix $M(\th)$ to inverse of the Fisher information matrix}
\label{para:fisher}
Matrix $M(\hat \th)$ appearing in the weighting of $\|\nabla f(\xtest; \hat \th)\|_{M(\hat \th)}^2$, is closely related to the inverse of the \emph{Fisher information matrix} $\cI(\th)$ appearing in theory of parametric estimation (more on that in the next paragraph).
In the context of non-linear least-squares model considered here, $\cI(\th) = \E[\nabla^2 L(\th)]$
, whose inverse is somewhat different from $M(\th)$.
However, asymptotically under some conditions, they coincide.

A better case for our regression model is when $f$ predicts nearly as well as the regression function $f(x; \th^{\star})$, or generalizes well.
In this case, one can show that $\E M(\th^{\star})$ is essentially replaced by the inverse of $\cI(\th^{\star})$ assuming a particular data-dependent tuning of $\lambda \to 0$ (see \cref{prop:almost-fisher} in \Cref{sec:app:results}),
\begin{align*}
  \E\br{\|\nabla_{\th} f(\xtest; \th^{\star})\|_{M(\th^{\star})}^2}
  \leq
  \|\nabla_{\th} f(\xtest; \th^{\star})\|_{\cI(\th^{\star})^{-1}}^2 + C \cdot \frac{\sigma^2}{n}~.
\end{align*}
Note that typically we learn such a good predictor when $n \to \infty$ and as $\lambda \to 0$, and so the above dependence on $n$ and $\lambda$ becomes negligible. \paragraph{Asymptotic efficiency and connection to MLE}
\label{sec:connection-to-mle}
Now we look at another theme closely related to the Fisher information.
Let $\hat \th_n^{\textsc{mle}}$ be the \ac{MLE} of the true fixed parameter vector $\th^{\star}$.
It is well-known that \ac{MLE} satisfies asymptotic normality under several standard assumptions,
including \emph{consistency} in the sense that $\hat \th_n^{\textsc{mle}} \longrightarrow \th^{\star}$ in distribution as $n \to \infty$ \citep{shao2003mathematical}.
Then, in combination with the delta method, \begin{align*}
  \sqrt{n} \, \pr{f(\xtest; \hat \th_n^{\textsc{mle}}) - f(\xtest; \th^{\star})} \overset{d}{\longrightarrow} \cN\pr{0, \|\nabla_{\th} f(\xtest; \th^{\star})\|_{\cI(\th^{\star})^{-1}}^2}~.
\end{align*}

In comparison, \Cref{thm:ci-hp-ls} can be thought of as a non-asymptotic variant of this result adapted to non-linear least-squares.
To demonstrate this, consider the asymptotic bound implied by \Cref{thm:ci-hp-ls}.
Under some differentiability assumptions, and the convergence $\hat \th_{n} \overset{a.s.}{\longrightarrow} \hat \th_{\infty}$, we almost surely have
\begin{align*}
  \limsup_{n \to \infty} \frac{\sqrt{n}}{\ln(n)} \, \abs{f(\xtest; \hat \th_n) - \E[f(\xtest; \hat \th_n)]}
  \leq
  \sigma \pi \sqrt{\E \|\nabla_{\th} f(\xtest; \hat \th_{\infty})\|_{M(\hat \th_{\infty})}^2}~.
\end{align*}
This bound carries a similar message to the one we get from the asymptotic normality of \ac{MLE}: the prediction asymptotically concentrates around the expected prediction and the `strength' of the concentration is attenuated by the direction $\xtest$ (see \Cref{prop:asymptotics} in \Cref{sec:app:results} for the proof).

However, this statement is weaker since we do not speak about convergence to the ground truth $f(x; \th^{\star})$.
At this point, similarly to the case of the asymptotic normality of the \ac{MLE} we can assume that the algorithm is consistent in a sense that $\hat \th_{\infty} = \th^{\star}$.
Then, the above asymptotic result combined with an earlier observation that $\E M(\hat \th_{\infty})$ converges to the inverse of the Fisher information matrix,
\begin{align*}
  \limsup_{n \to \infty} \frac{\sqrt{n}}{\ln(n)} \, \abs{f(\xtest; \hat \th_n) - f(\xtest; \th^{\star})}
  \leq
  \sigma \pi \sqrt{\|\nabla_{\th} f(\xtest; \th^{\star})\|_{\cI(\th^{\star})^{-1}}^2}~.
\end{align*}
\paragraph{Uniform bounds over $\xtest$}
\Cref{thm:ci-hp-ls} holds only for a fixed $\xtest$, however in practice,
we might want to have a confidence bound that holds over all test inputs within
some set simultaneously, for example, over a unit sphere $\mathbb{S}^{d-1}$. One standard way to achieve this is through the covering number technique that we employ here (see \Cref{sec:uniform-bound} for details).
Let $\texttt{bound}(\ln(1/\delta))$ be the bound on $\Delta(\xtest; \hat \th)$ given in \Cref{thm:ci-hp-ls}, as a function of $\ln(1/\delta)$ term.
Then, it is straightfoward to show that (see \Cref{sec:uniform-bound})
\begin{align*}
  \P\cbr{
  \max_{x \in \mathbb{S}^{d-1}} |\Delta(x; \hat \th)| >
  \texttt{bound}\pr{d \, \ln\pr{\frac{3 n \Lip(\Delta)}{\delta}}}
  +
  \frac1n
  } \leq \delta
\end{align*}
where $\Lip(\Delta)$ is a Lipschitz constant of the gap $x \mapsto \Delta(x; \hat \th)$.
By examining this bound, note that we incur a factor $\sqrt{d}$ and the logarithm of the Lipschitz constant of the gap, compared to the case when $\xtest$ is fixed.
Note that the latter can be simply related to the Lipschitz constant of the predictor in the input,
which in practice can be controlled by the training procedure (say, by adding a constraint $\Lip(f(\cdot \,; \, \hat \th)) \leq B$).
On the other hand, in some cases, the Lipschitz constant is already implicitly controlled by the algorithm, which is often the consequence of $\ell^2$ regularization.
As an example, here we look at the case when $f$ is a multilayer network trained to stationarity of $L_{\lambda}$.

Consider a multilayer neural network \begin{align}
  \label{eq:fcn}
  f(x; \th) = w_K\tp h_{K-1}, \quad h_k = a(W_k h_{k-1}), \quad h_0 = x \qquad (k \in [K-1]),
\end{align}
where $x$ represents the input.
Here $W_1, \ldots, W_K$ are weight matrices collectively represented by the parameter vector $\th = (\vec(W_1), \ldots, \vec(W_K))$,
and $a : \R^m \to \R^m$ is the activation function such that $\max_k \sup_{x \in \R} |a_k'(x)| \leq 1$ (such as ReLU).

Then, assuming that $\hat \th$ is found by \ac{GD}, and the regularized loss at initialization is bounded by a constant $C$ (say, with high probability over data),
we can control the Lipschitz constant for such a predictor, see \Cref{lem:fcn-lip}.
Then, \Cref{thm:ci-hp-ls} implies
\begin{align*}
  \P\cbr{\max_{x \in \mathbb{S}^{d-1}} |\Delta(x; \hat \th)| >
  \texttt{bound}\pr{d \, \ln\pr{\frac{6 n }{\delta} } + \frac{K d}{2} \ln\pr{ \frac{C}{\lambda K}}}
  + \frac1n
  } \leq \delta
\end{align*}
where $\texttt{bound}(\ln(1/\delta))$ is the bound on $\Delta(\xtest; \hat \th)$ given in \Cref{thm:ci-hp-ls}, as a function of $\ln(1/\delta)$ term.
We pay a factor $\sqrt{K d \ln(1/\lambda)}$ in the final bound, which is a mild cost compared to the linear case where we would only incur $\sqrt{d}$, considering that the number of parameters can be much larger than $d$.

\paragraph{Relationship to the \ac{NTK}}
  \label{sec:ntk}
To compare our approach to the kernel setting, we first briefly sketch the proof idea of \Cref{thm:ci-hp-ls}.
Our approach can be summarized as follows: the first component is \Cref{lem:norm2norm},
whose key part states that \begin{equation}
\label{eq:temp1}
\frac{\partial \hat{\theta}}{\partial z_i} =  \frac{1}{n} (\lambda I + \nabla^2 L(\hat{\theta}))^{-1} \nabla_\theta f(x_i, \hat{\theta})
\end{equation}
where on the left we differentiate $\hat \th$ with respect to noise variable $z_i$.
Then, the chain rule implies that
\[
\|\nabla_{(z_1, \ldots, z_n)} f(\xtest; \hat \th)\|^2 = \frac{1}{n^2} \, \| \nabla_\theta f(x, \hat \theta)\|_{M(\hat \th)}^2
\]
and,
finally, \Cref{lem:F-concentration-subgamma} states a concentration inequality in terms of $\E[\|\nabla_{(z_1, \ldots, z_n)} f(\xtest; \hat \th)\|^2]$. 

To make the connections with kernel regression clear, notice that \cref{eq:temp1} implies a linear approximation
\begin{equation*}
\label{eq:lin-approx}
\hat\theta \approx \th' + \frac{1}{n}(\lambda I + \nabla^2 L(\hat{\theta}))^{-1} \sum_i Z_i \nabla_\theta f(x_i; \hat{\theta})\,,
\end{equation*}
where $\th'$ would be the solution if all noise terms were zero. With the choice of $\th'=(1/n)(\lambda I + \nabla^2 L(\hat{\theta}))^{-1} \sum_i \nabla_\theta f(x_i; \hat{\theta}) \nabla_\theta f(x_i; \hat{\theta})^\top \theta^{\star}$ the above approximation has the exact same form as the least-squares solution when $\nabla_\th f(x_i; \hat \th)$ does not depend on the noise terms (i.e.\ if we had magically initialized $\th$ with $\hat\th$), and the model is a linear map with weight vector $\theta^{\star}$. The \ac{NTK} is such a case, where the kernel is defined with feature maps that only depend on the initialization~\citep{jacot2018neural},
\[
K_0(x,x') = \frac{1}{n}\ip{\nabla_\theta f(x; \theta_0), \nabla_\theta f(x'; \theta_0)} \;.
\]
For \cref{eq:temp1} to hold, we need regularization (for the alignment condition of \cref{lem:alignment} to hold) and $\hat \th$ must be a local minimizer, and no linearity is needed. Therefore, our result is more general and applies to the stationary point of the gradient descent process, $\hat \th$, which is equivalent to a kernel regression with feature map $\nabla_\th f(x_i; \hat \th)$. To see this, define time-varying kernel 
\[
K_t(x,x') = \frac{1}{n}\ip{\nabla_\theta f(x; \theta_t), \nabla_\theta f(x'; \theta_t)} \;.
\]
The update in the function space after a small gradient update with step size $\eta$ can be written as
\begin{align*}
f(x; \theta_{t+1}) &\approx  f(x; \theta_{t}) + \ip{\nabla_\theta f(x; \theta_t), \theta_{t+1} - \theta_t}\\
&= f(x; \theta_{t}) - \ip{\nabla_\theta f(x; \theta_t), \frac{\eta}{n} \sum_i (f(x_i; \theta_t) - y_i)\nabla_\theta f(x_i; \theta_t) } \\
&= f(x; \theta_{t}) - \eta \sum_i K_t(x, x_i) (f(x_i; \theta_t) - Y_i) \;.
\end{align*} 
Our analysis is also more general. For obtaining uniform bounds in the linear case, the typical approach is to use the Cauchy-Schwarz inequality to separate the weighted norm from the noise, and then show a self-normalized bound for the noise term. This approach is heavily based on linearity. Our approach, on the other hand, requires the parameter solution to be differentiable with respect to noise, and is not applicable to, e.g., Bernoulli noise. Also, the weighted norm on the r.h.s.\ in the linear case does not involve an expectation and no sub-gamma assumption is needed. %
A less elegant alternative to our approach would be to split the data, train the network with the first half, learn the ``features'', and then apply kernel regression on the second half and construct confidence intervals. A common data-splitting approach in practice is to learn the feature maps up to the last layer, and then use extra data to construct confidence ellipsoids around the weight matrix of the last linear layer.

\subsection{Efficient computation of a weighted norm}
\label{sec:alg}
The calculation of the weighted norm involves the matrix $M(\hat \th)$ which in turn involves an $p \times p$ inverse Hessian matrix of the loss function $L_{\lambda}$, which presents significant computational challenges since $p$ is often large.
Explicitly forming the Hessian has a computational cost $\cO(p^2 \cdot C_L)$ (where $C_L$ is the cost of evaluating $L_{\lambda}$), storing it requires $\cO(p^2)$ memory, and inverting it takes $\cO(p^3)$ operations, all of which are often infeasible.
To this end, we consider a well-known and efficient alternative that leverages the \ac{CG} which can be used to iteratively solve the linear system $H h = v$ for $h=H^{-1}v$~\citep{pearlmutter1994fast}. \ac{CG} avoids forming $H$, requiring only the computation of Hessian-vector products (HVPs). An HVP can be computed efficiently using automatic differentiation without forming $H$ explicitly, and is easily supported by modern packages such as JAX and PyTorch.
These packages compute an HVP at a cost proportional to that of a single gradient evaluation,
thereby circumventing the $\cO(p^2)$ bottleneck.
While these techniques are relatively well known, we adapt them to our case and include pseudocode for completeness in \Cref{sec:app:alg}.

In our case, for \ac{CG} to converge we need
$k = \Omega(\sqrt{1 + B^2/\lambda} \ln(1/\ve))$ iterations, where $B = \max_i \|\nabla_\th f(x_i; \hat \th)\|$ and $\ve$ is the desired precision.
We include a pseudocode for the Hessian-inverse-vector product in \Cref{alg:hinvp} in \Cref{sec:app:alg}, while its computational complexity is of the order
\begin{align*}
  \cO\pr{(C_L + p) \sqrt{1 + \frac{B^2}{\lambda}} \ln\pr{\frac{1}{\ve}}}
\end{align*}
  which is a stark difference compared to $\cO(p^3)$.

Finally, the main algorithm that computes the weighted norm (\Cref{alg:weighted-norm}) calls a Hessian-inverse-vector product operation once. Then, the total computational complexity of the main algorithm in the worst case is
$$\tilde{\cO}\pr{\frac{(n + p)}{\sqrt{\lambda}} +  n p}$$  assuming that the cost of loss evaluation is $C_L = \cO(n)$.
Note that in the interpolation regime when $\lambda = 0$, the computational complexity of \Cref{alg:weighted-norm} is only $\tilde{\cO}((C_L + p)/\sqrt{\lmin} +  p)$ where $\lmin$ is the smallest positive eigenvalue of the Hessian matrix.

After examining the proof of \Cref{thm:ci-hp-ls} (see \Cref{sec:proofs} or \cref{sec:ntk} for the sketch),
one can argue that it might be easier to compute $\|\nabla_{(z_1, \ldots, z_n)} f(\xtest; \hat \th)\|^2$ which is identical to the weighted norm.
While $\nabla_{z_i} f(\xtest; \hat \th) = \nabla_\theta f(x, \hat \theta)^\top (\partial \hat{\theta} / \partial z_i)$, we might be tempted to use automatic differentiation to directly compute $(\partial \hat{\theta} / \partial z_i)$.
If we use plain \ac{GD} and terminate the gradient updates after $\tau$ rounds (i.e. approximate $\hat\th=\theta_{\tau}$),
this is indeed possible by taking the chain rule: $(\partial \theta_\tau / \partial z_i) = (\partial \theta_{\tau} / \partial \theta_{\tau-1}) \dots (\partial \theta_{1} / \partial z_i)$ and boils down to $\tau$ Jacobian-vector product operations.
This naive approach would require $\cO(\tau \, (C_L + p))$ computation and $\cO(\tau \, C_L)$ memory.
Thus, if we \emph{save all} vectors $(\partial \hat{\theta} / \partial z_i)_{i=1}^n$ in memory, we can calculate the desired $\|\nabla_{(z_1, \ldots, z_n)} f(\xtest; \hat \th)\|^2$ term in $\cO(n p)$ time and memory.
Instead, if we employ an adaptive algorithm such as Adam, this would require to store \emph{the entire} parameter vector trajectory.
Moreover, this method is not applicable when all we have is access to the final parameter vector but not to the intermediate iterations.
\section{Proof of \Cref{thm:ci-hp-ls}}
\label{sec:proofs}
Throughout proofs in this section we will occasionally use abbreviations:
\begin{itemize}
\item $\hat \th\repi(z)$ to indicate parameter obtained on the sample where $Z_i$ is replaced by $z$
\item $f(\th) = f(\xtest; \th)$
\end{itemize}
In this section we first show a general result which holds beyond the regression model of \cref{eq:regression} and the square loss.
The following \Cref{thm:ci-hp} holds for any differentiable loss function as long as it is differentiable in the noise variable.
For the sake of generality, we will introduce loss functions $\ell_i : \R^p \times \cZ \to \R$ for $i \in [n]$ where $\cZ$ is a real field.
Then, slightly abusing the notation, the empirical risk is now defined as
\begin{align*}
  L(\th) = \frac1n \sumin \ell_i(\th, Z_i)~.
\end{align*}
\begin{theorem}
  \label{thm:ci-hp}
  Assume that $L_{\lambda}$ is differentiable and that $\hat \th$ is its local minimizer.
  Assume that $\|\nabla_\th f(\xtest; \hat \th)\|^2_{M(\hat \th)} - \E[\|\nabla_\th f(\xtest; \hat \th)\|^2_{M(\hat \th)}]$ is a sub-gamma random variable with variance factor $v^2$ and scale $c$.
  Then, for any fixed test input $\xtest$ and for any $\delta \in (0,1)$,
    \begin{align*}
      \P\cbr{
\abs{\Delta(\xtest; \hat \th)} > 
      \sqrt{\frac{(\pi^2/2) \ln(2/\delta)}{n} \E\br{\|\nabla_\th f(\xtest; \hat \th)\|^2_{M(\hat \th)}} }
      +
      \frac{\sqrt{2 \ln(2/\delta)} \, v + (2/3) \ln(2/\delta) c}{n}
      } \leq \delta~.
    \end{align*}
  where
  \begin{align*}
    M(\hat \th) = \pr{\nabla^2 L(\hat \th) + \lambda I}^{-1} \pr{\frac1n \sumin G_i G_i\tp} \pr{\nabla^2 L(\hat \th) + \lambda I}^{-1} \,\,\, \text{and} \,\,\, G_i = \frac{\d}{\d z} \nabla_\th \ell_i(\hat \th, z) \bigg|_{z=Z_i}~.
  \end{align*}
\end{theorem}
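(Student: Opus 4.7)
The plan is to view the gap as a function of the noise vector, $\Delta(\xtest; \hat\th) = F(Z) - \E F(Z)$ with $F(Z) := f(\xtest; \hat\th(Z))$ and $Z = (Z_1, \ldots, Z_n)$, and then apply a Poincar\'e-type Gaussian concentration inequality once $\nabla_Z F$ has been expressed in closed form. Differentiating the stationarity condition $\lambda \hat\th + \frac{1}{n}\sumin \nabla_\th \ell_i(\hat\th, Z_i) = 0$ implicitly in $Z_i$ (using that $\lambda I + \nabla^2 L(\hat\th)$ is invertible at the minimizer) yields
\begin{align*}
  \frac{\partial \hat\th}{\partial Z_i} = -\frac{1}{n}\pr{\lambda I + \nabla^2 L(\hat\th)}^{-1} G_i~,
\end{align*}
and the chain rule then gives the algebraic identity $\|\nabla_Z F(Z)\|^2 = \frac{1}{n}\, \|\nabla_\th f(\xtest; \hat\th)\|^2_{M(\hat\th)}$, which is the ``norm-to-norm'' step outlined in \Cref{sec:ntk}. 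This step uses only stationarity / alignment (\Cref{lem:alignment}); no linearity or kernel structure is required.

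Next, I would apply a Gaussian concentration inequality to $F$ with a \emph{random} gradient norm. Introducing an independent copy $Z'$ of the standard Gaussian noise and the rotation $Z_\theta = Z \cos\theta + Z'\sin\theta$, $\tilde Z_\theta = -Z\sin\theta + Z'\cos\theta$ for $\theta \in [0,\pi/2]$, one has $F(Z') - F(Z) = \int_0^{\pi/2} \ip{\nabla F(Z_\theta), \tilde Z_\theta}\, \diff\theta$. Jensen on the path integral (whose length $\pi/2$ is the source of the eventual $\pi^2$ constant), Gaussian-over-Gaussian MGF integration of $\tilde Z_\theta$ conditional on $Z_\theta$, and the symmetrization inequality $\E[e^{\lambda(F - \E F)}] \leq \E[e^{\lambda(F-F')}]$ together yield the sub-Gaussian-type MGF bound
\begin{align*}
  \E\br{e^{\lambda (F(Z) - \E F(Z))}} \leq \E\br{\exp\pr{\tfrac{\lambda^2 \pi^2}{8}\, \|\nabla F(Z)\|^2}}~.
\end{align*}
Were $\|\nabla F\|^2$ deterministic, Chernoff plus $\lambda$-optimization would immediately give the Gaussian term $\sqrt{(\pi^2/2)\ln(2/\delta)\,\E\|\nabla_\th f\|^2_{M(\hat\th)}/n}$.

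To handle the randomness of $\|\nabla F\|^2 = \frac{1}{n}\|\nabla_\th f\|^2_{M(\hat\th)}$, I would then invoke the sub-gamma hypothesis with argument $s = \lambda^2 \pi^2/(8n)$, which bounds $\ln \E[e^{s\, W_{\text{cent}}}] \leq s^2 v^2/(2(1-cs))$ where $W_{\text{cent}}$ denotes the centered weighted norm. Plugging this into the MGF bound, applying Chernoff, optimizing $\lambda$, and union-bounding both tails at level $\delta/2$ separates the final bound into the Gaussian piece above plus the lower-order $(\sqrt{2v^2\ln(2/\delta)} + (2/3)c\ln(2/\delta))/n$ correction coming from the sub-gamma tail. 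The main obstacle I expect is the bookkeeping in this last step: the optimal $\lambda$ must keep $s < 1/c$ within the permissible range of the sub-gamma bound, and the cross-term arising from the expansion $\|\nabla F\|^2 = \E\|\nabla F\|^2 + W_{\text{cent}}/n$ inside the exponential must split cleanly into the two advertised terms with the stated constants.
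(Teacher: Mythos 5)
Your proposal is correct and follows essentially the same route as the paper: implicit differentiation of the stationarity condition to get $\partial\hat\th/\partial Z_i = -\tfrac1n(\lambda I+\nabla^2L(\hat\th))^{-1}G_i$ and the identity $\|\nabla_Z F\|^2=\tfrac1n\|\nabla_\th f(\xtest;\hat\th)\|^2_{M(\hat\th)}$ (the paper's \Cref{lem:norm2norm}), followed by the Pisier-type exponential moment bound with random gradient norm (\Cref{lem:F-concentration}) and a sub-gamma MGF control plus Chernoff optimization (\Cref{lem:F-concentration-subgamma}). The only cosmetic differences are that you re-derive Pisier's inequality via the Gaussian rotation argument rather than citing it, and handle the two tails by a union bound at level $\delta/2$ instead of the absolute-value form with the factor of $2$.
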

Then, \Cref{thm:ci-hp-ls} is a consequence of the above.
Namely, specializing \Cref{thm:ci-hp} to $\ell_i(\th, Z_i) = (1/2) (f(x_i; \th) - Y_i)^2 = (1/2) (f(x_i; \th) - f^{\star}(x_i) - \sigma Z_i)^2$,
\begin{align*}
  G_i = \frac{\d}{\d z} \nabla_\th \ell_i(\hat \th, z) \mid_{z=Z_i} = - \sigma \nabla_\th f(x_i; \hat \th)~.
\end{align*}
\subsection{Sensitivity of predictor to noise in the response at a local minimum}
The following key lemma is used in the proof of \Cref{thm:ci-hp}.
The lemma can be seen as an Implicit Function Theorem, or simply differentiating in the noise variable and applying the chain rule.%
\begin{lemma}
  \label{lem:norm2norm}
  Consider the collection of fixed loss functions $\ell_i : \R^p \times \cZ \mapsto \R$ which are continuously differentiable in the second argument.
  Then, assuming that $\hat \th\repi(z)$ is a local minimizer of $L_{\lambda}$, for any fixed $\xtest$, we have
  \begin{align*}
    \|\nabla_z f(\xtest; \hat \th\repi(z)\|^2 = \frac{1}{n^2} \|\nabla_\th f(\xtest; \hat \th\repi(z))\|_{M_i(z)}^2
  \end{align*}
  where
  \begin{align*}
    M_i(z) = \pr{\nabla^2 L(\hat \th\repi(z)) + \lambda I}^{-1} G_i(z) G_i(z)\tp \pr{\nabla^2 L(\hat \th\repi(z)) + \lambda I}^{-1}
  \end{align*}
  and
  \begin{align*}
    G_i(z) = \frac{\d}{\d z'} \nabla_\th \ell_i(\hat \th\repi(z), z') ~\bigg|_{z' = z}~.
  \end{align*}
\end{lemma}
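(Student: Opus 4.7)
The statement is an application of the implicit function theorem combined with the chain rule, using the first-order optimality condition of $\hat\theta\repi(z)$. I would organize the argument in three short steps.

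\textbf{Step 1: Chain rule.} Since $z \mapsto \hat\theta\repi(z)$ and $\theta \mapsto f(\xtest; \theta)$ are (assumed) differentiable, the chain rule gives
\begin{equation*}
\nabla_z f(\xtest; \hat\theta\repi(z)) \;=\; \Bigl(\tfrac{\partial \hat\theta\repi(z)}{\partial z}\Bigr)^{\!\top} \nabla_\theta f(\xtest; \hat\theta\repi(z)),
\end{equation*}
so squaring yields
\begin{equation*}
\|\nabla_z f(\xtest; \hat\theta\repi(z))\|^2 \;=\; \nabla_\theta f(\xtest; \hat\theta\repi(z))^{\top} \Bigl(\tfrac{\partial \hat\theta\repi(z)}{\partial z}\Bigr)\Bigl(\tfrac{\partial \hat\theta\repi(z)}{\partial z}\Bigr)^{\!\top} \nabla_\theta f(\xtest; \hat\theta\repi(z)).
\end{equation*}
Hence all that remains is to identify $\partial \hat\theta\repi(z) / \partial z$ as $-\frac{1}{n}(\nabla^2 L(\hat\theta\repi(z)) + \lambda I)^{-1} G_i(z)$.

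\textbf{Step 2: Implicit differentiation of the stationarity condition.} Because $\hat\theta\repi(z)$ is a local minimizer of the differentiable function $L_\lambda$ evaluated on the perturbed sample (in which $Z_i$ is replaced by $z$), it satisfies the first-order condition
\begin{equation*}
\tfrac{1}{n}\sum_{j\ne i} \nabla_\theta \ell_j(\hat\theta\repi(z), Z_j) \;+\; \tfrac{1}{n}\nabla_\theta \ell_i(\hat\theta\repi(z), z) \;+\; \lambda\, \hat\theta\repi(z) \;=\; 0.
\end{equation*}
Differentiating both sides in $z$ and using the chain rule (noting $z$ enters both through the argument $\hat\theta\repi(z)$ and through the second argument of $\ell_i$) yields
\begin{equation*}
\bigl(\nabla^2 L(\hat\theta\repi(z)) + \lambda I\bigr)\,\tfrac{\partial \hat\theta\repi(z)}{\partial z} \;+\; \tfrac{1}{n}\, G_i(z) \;=\; 0,
\end{equation*}
where $G_i(z)$ is exactly the mixed partial $\frac{d}{dz'}\nabla_\theta \ell_i(\hat\theta\repi(z), z')|_{z'=z}$ appearing in the statement. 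Solving gives $\partial \hat\theta\repi(z)/\partial z = -\tfrac{1}{n}(\nabla^2 L(\hat\theta\repi(z)) + \lambda I)^{-1} G_i(z)$.

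\textbf{Step 3: Assemble.} Plugging this expression into the squared chain-rule identity of Step 1 produces
\begin{equation*}
\|\nabla_z f(\xtest; \hat\theta\repi(z))\|^2 \;=\; \tfrac{1}{n^2}\, \nabla_\theta f(\xtest; \hat\theta\repi(z))^{\top} (\nabla^2 L + \lambda I)^{-1} G_i(z) G_i(z)^{\top} (\nabla^2 L + \lambda I)^{-1} \nabla_\theta f(\xtest; \hat\theta\repi(z)),
\end{equation*}
which is precisely $\tfrac{1}{n^2}\|\nabla_\theta f(\xtest; \hat\theta\repi(z))\|^2_{M_i(z)}$.

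\textbf{Main obstacle.} The only nontrivial point is justifying that $(\nabla^2 L(\hat\theta\repi(z)) + \lambda I)$ is invertible and that $z \mapsto \hat\theta\repi(z)$ is differentiable in a neighborhood of the given $z$, so that the implicit differentiation step is legitimate. Invertibility is automatic once $\lambda > 0$, since at a local minimum of $L_\lambda$ the second-order necessary condition gives $\nabla^2 L(\hat\theta\repi(z)) + \lambda I \succeq 0$, and adding $\lambda I$ already ensures this matrix has spectrum bounded below by $\lambda$ provided $\nabla^2 L \succeq 0$ (or more generally we can simply invoke the implicit function theorem at any non-degenerate stationary point). Differentiability of $\hat\theta\repi(\cdot)$ then follows from the standard implicit function theorem applied to the stationarity map $\Psi(\theta, z) = \tfrac1n \sum_{j\ne i}\nabla_\theta \ell_j(\theta, Z_j) + \tfrac1n \nabla_\theta \ell_i(\theta, z) + \lambda \theta$, whose Jacobian in $\theta$ at $\hat\theta\repi(z)$ is exactly $\nabla^2 L(\hat\theta\repi(z)) + \lambda I$. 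Once this regularity is in place, the computation above is purely mechanical.
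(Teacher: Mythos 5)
Your proposal is correct and follows essentially the same route as the paper: chain rule for $\nabla_z f(\xtest;\hat\th\repi(z))$, implicit differentiation of the stationarity condition $\nabla L_\lambda(\hat\th\repi(z))=0$ to identify $\partial\hat\th\repi(z)/\partial z=-\tfrac1n(\nabla^2L(\hat\th\repi(z))+\lambda I)^{-1}G_i(z)$, and substitution. Your closing remarks on differentiability of $z\mapsto\hat\th\repi(z)$ and invertibility of the regularized Hessian are if anything slightly more careful than the paper, which simply asserts invertibility from local minimality (which by itself only yields $\nabla^2L+\lambda I\succeq0$, not strict positive definiteness).
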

\begin{proof}
Abbreviate $\hat \th(z) = \hat \th\repi(z)$.
Observe that by the chain rule
\begin{align*}
  \nabla_z f(\hat \th(z))
  =
  D_z [\hat \th(z)]\tp \nabla_\th f(\hat \th(z))
\end{align*}
where $D_z [\hat \th(z)] \in \R^{p \times d}$ is a Jacobian matrix and $\nabla_\th f(\hat \th(z)) \in \R^p$.

The rest of the proof deals with giving a closed-form expression for $D_z [\hat \th(z)]$.
Since $\hat \th(z)$ is a stationary point by \Cref{lem:alignment} we have
\begin{align*}
  &\lambda \, \hat \th(z) = - \nabla L^{\backslash z}(\hat \th(z)) - \frac1n \nabla_\th \ell_i(\hat \th(z), z)\\
  \implies\qquad
  &\lambda D_z [\hat \th(z)]
  =
  - \nabla^2 L^{\backslash z}(\hat \th(z)) \, D_z [\hat \th(z)]
  - D_z \br{\frac1n \nabla_\th \ell_i(\hat \th(z), z)}
\end{align*}
where we used notation $L^{\backslash z}(\th) = (1/n) \sum_{j \neq i} \ell_j(\th, Z_j)$.
In particular,
\begin{align*}
  D_z[\nabla_\th \ell_i(\hat \th(z), z)]
&=
    \nabla_\th^2 \ell_i(\hat \th(z), z) D_z [\hat \th(z)]
    +
    \frac{\d}{\d z'} \nabla_\th \ell_i(\hat \th(z), z') \bigg|_{z'=z}
\end{align*}
and so
\begin{align*}
  \lambda D_z [\hat \th(z)]
  =
  - \pr{\nabla^2 L^{\backslash z}(\hat \th(z)) + \frac1n \nabla_\th^2 \ell_i(\hat \th(z), z)} \, D_z [\hat \th(z)] - \frac1n \pr{\frac{\d}{\d z'} \nabla_\th \ell_i(\hat \th(z), z') \bigg|_{z'=z}}
\end{align*}
while rearranging and solving for $D_z [\hat \th(z)]$ (note that the regularized Hessian is invertible since $\hat \th(z)$ is a local minimizer), we have
\begin{align*}
  D_z [\hat \th(z)]
  =
  - \frac1n \pr{\nabla^2 L(\hat \th(z)) + \lambda I}^{-1} \pr{\frac{\d}{\d z'} \nabla_\th \ell_i(\hat \th(z), z') \bigg|_{z'=z}}~.
\end{align*}
\end{proof}
\subsection{Proof of \Cref{thm:ci-hp}}

The proof is based on the following bound on the exponential moment due to Pisier,
which tells that a differentiable function of a Gaussian
vector concentrates around its mean well if its gradient is
well-behaved:
\begin{lemma}[{\citet{pisier1986probabilistic}, \citet*[Display below Lemma 2.27]{wainwright2019high}}]
  \label{lem:F-concentration}
  Let $S=(Z_1, \ldots, Z_n) \sim \cN(0, I_n)$.
  Then, for a differentiable function $F : \R^n \to \R$ and any $\lambda \in \R$ we have
  \begin{align*}
    \E \exp\pr{\lambda |F(S) - \E[F(S)]|}
    \leq
    2 \E \exp\pr{\frac{\lambda^2 \pi^2}{8} \, \|\nabla F(S)\|^2}~.
  \end{align*}
\end{lemma}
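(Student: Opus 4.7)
The plan is to follow Pisier's classical Gaussian interpolation proof. First I would reduce the two-sided statement to a one-sided MGF inequality by the pointwise bound $e^{\lambda |x|} \le e^{\lambda x} + e^{-\lambda x}$ and applying the one-sided result to both $F$ and $-F$; since $\|\nabla(-F)\|^2 = \|\nabla F\|^2$, the two contributions coincide and produce the overall factor $2$. So the task reduces to showing, for every $\lambda \in \R$,
\begin{equation*}
  \E \exp\pr{\lambda (F(S) - \E F(S))} \le \E \exp\pr{\frac{\lambda^2 \pi^2}{8} \|\nabla F(S)\|^2}.
\end{equation*}

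Next I would symmetrize. Let $\tilde S \sim \cN(0, I_n)$ be independent of $S$. Since $\E F(S) = \E_{\tilde S} F(\tilde S)$ and the exponential is convex, Jensen applied to the inner expectation yields $\exp(-\lambda \E F(\tilde S)) \le \E_{\tilde S} \exp(-\lambda F(\tilde S))$, hence $\E \exp(\lambda(F(S) - \E F(S))) \le \E \exp(\lambda(F(S) - F(\tilde S)))$. Then introduce the circular interpolation $S_\theta = \sin(\theta) S + \cos(\theta) \tilde S$ for $\theta \in [0, \pi/2]$, with endpoints $S_0 = \tilde S$ and $S_{\pi/2} = S$, and derivative $\dot S_\theta = \cos(\theta) S - \sin(\theta) \tilde S$. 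A direct covariance computation — the only step where Gaussianity is genuinely used — verifies that for every fixed $\theta$ the pair $(S_\theta, \dot S_\theta)$ is jointly $\cN(0, I_{2n})$ and, in particular, independent. Writing $F(S) - F(\tilde S) = \int_0^{\pi/2} \langle \nabla F(S_\theta), \dot S_\theta \rangle \, d\theta$ and viewing this integral as $(\pi/2)$ times an expectation against the uniform density $2/\pi$ on $[0, \pi/2]$, Jensen applied to $\exp$ gives
\begin{equation*}
  \exp\pr{\lambda(F(S) - F(\tilde S))} \le \frac{2}{\pi} \int_0^{\pi/2} \exp\pr{\frac{\lambda \pi}{2} \langle \nabla F(S_\theta), \dot S_\theta \rangle} d\theta.
\end{equation*}

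To finish, I would take total expectation, swap with the $\theta$-integral by Fubini (legitimate since the integrand is positive), and for each $\theta$ condition on $S_\theta$. By independence and the Gaussian MGF identity $\E \exp(\langle v, Z\rangle) = \exp(\|v\|^2/2)$ for $Z \sim \cN(0, I_n)$, the inner conditional expectation equals $\exp(\lambda^2 \pi^2 \|\nabla F(S_\theta)\|^2 / 8)$. Because $S_\theta$ is marginally $\cN(0, I_n)$, the resulting unconditional expectation does not depend on $\theta$, so the prefactor $2/\pi$ cancels against the integration length $\pi/2$ and produces the one-sided bound; doubling gives the stated inequality.

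The pivotal non-routine step is the covariance check for $(S_\theta, \dot S_\theta)$, which crucially exploits both $\sin^2(\theta) + \cos^2(\theta) = 1$ and the cross-covariance identity $\sin(\theta)\cos(\theta) - \cos(\theta)\sin(\theta) = 0$; this is the precise reason the argument is restricted to Gaussian $S$ and why a \emph{linear} interpolation $(1-t) \tilde S + t S$ would not give independence along the path. The only other technicality, handled by a standard Gaussian mollification together with a dominated-convergence argument, is the case in which $F$ is only differentiable almost everywhere rather than everywhere smooth.
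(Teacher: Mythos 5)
Your proof is correct and is precisely the classical Pisier interpolation argument (symmetrization by an independent copy, the circular path $S_\theta = \sin(\theta)S + \cos(\theta)\tilde S$ with the independence of $(S_\theta, \dot S_\theta)$, Jensen against the normalized arc length, and the conditional Gaussian MGF), which is exactly the proof given in the sources the paper cites; the paper itself imports the lemma without proof. All the constants check out, including the factor $2$ from handling the absolute value via $e^{\lambda|x|} \le e^{\lambda x} + e^{-\lambda x}$ and the $\pi^2/8$ from $\frac{1}{2}\left(\frac{\lambda\pi}{2}\right)^2$.
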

\begin{lemma}[{\citet*[Lemma 11]{boucheron2003concentration}}]
  \label{lem:ci-hp-help-b}  
  Introduce $h(x)=(1-\sqrt{1 + 2 x})^2/2$.
  Let $A$ and $\beta$ denote two positive real numbers. Then,
$$
\sup_{\lambda \in [0, 1/a)} \left( \lambda t - \frac{A \lambda^2}{1 - a \lambda} \right) = \frac{2A}{a^2} h\left( \frac{a t}{2A} \right) \ge \frac{t^2}{2(2A + a t/3)}~.
$$
\end{lemma}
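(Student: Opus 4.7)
The plan is two-fold: first, to evaluate the one-dimensional supremum exactly by calculus and algebraic manipulation, then to establish the quadratic lower bound by exhibiting a convenient feasible value of $\lambda$.

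For the equality, I would treat $g(\lambda) = \lambda t - A\lambda^2/(1 - a\lambda)$ on the interval $[0, 1/a)$ and look for an interior maximizer (noting that $g \to -\infty$ as $\lambda \uparrow 1/a$, so the supremum is attained). Direct differentiation gives $g'(\lambda) = t - A\lambda(2 - a\lambda)/(1-a\lambda)^2$, so setting $g'(\lambda) = 0$ produces the algebraic equation $t(1 - a\lambda)^2 = A\lambda(2 - a\lambda)$. The cleanest path to a closed form is via the substitution $v := \sqrt{1 + at/A} \ge 1$, under which the critical point becomes $\lambda^\star = (1 - 1/v)/a$, automatically inside $[0, 1/a)$. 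Substituting back into $g$ and using $at = A(v^2 - 1) = A(v-1)(v+1)$ to eliminate $t$, the algebra collapses to $g(\lambda^\star) = A(v-1)^2/a^2$. On the other hand, the definition $h(x) = (1 - \sqrt{1+2x})^2/2$ evaluated at $x = at/(2A)$ equals $(1-v)^2/2 = (v-1)^2/2$, so that $(2A/a^2)\,h(at/(2A)) = A(v-1)^2/a^2$, matching $g(\lambda^\star)$ and proving the identity.

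For the quadratic lower bound, rather than manipulate the square root inside $h$ directly, I would bound the supremum from below by evaluating $g$ at a specific admissible point. The natural candidate is $\lambda_0 := t/(2A + at)$, which lies in $[0, 1/a)$ because $at/(2A + at) < 1$. A short computation gives $1 - a\lambda_0 = 2A/(2A + at)$, hence
\[
g(\lambda_0) = \frac{t^2}{2A + at} - \frac{A\, t^2/(2A + at)^2}{2A/(2A+at)} = \frac{t^2}{2(2A + at)},
\]
which immediately yields a lower bound of exactly the claimed shape. To sharpen the denominator from $2A + at$ to the stated $2A + at/3$, I would combine this substitution with a Bennett/Bernstein-style refinement: prove the elementary inequality $h(u) \ge u^2/(2(1 + c u))$ for the relevant constant $c$ by squaring both sides, clearing denominators, and reducing to a polynomial inequality in $v$ (which is saturated at $v = 1$, where equality in the original bound holds). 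Setting $u = at/(2A)$ then transfers this directly into the statement.

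The main obstacle is precisely the final constant-matching step: the exact supremum formula is routine calculus once the substitution $v = \sqrt{1 + at/A}$ is in hand, and the weaker lower bound $t^2/(2(2A + at))$ drops out of a one-line evaluation at $\lambda_0$; what demands genuine care is the tight polynomial comparison between $h$ and its quadratic lower envelope that pins down the exact constant in the denominator. I expect this reduction to a fourth-degree polynomial inequality in $v$ (nonnegativity of some $(v-1)^2 P(v)$ with an explicitly non-negative $P$) to be the real work in the proof.
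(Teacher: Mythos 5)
Your calculus for the exact supremum is correct, and so is the evaluation at $\lambda_0 = t/(2A+at)$, which gives $\sup_{\lambda\in[0,1/a)} g(\lambda) \ge g(\lambda_0) = t^2/(2(2A+at))$. (The paper gives no proof of this lemma — it is quoted from Boucheron–Lugosi–Massart — so there is nothing to compare your route against; it is the standard one.) The genuine gap is the step you defer to ``the real work'': the sharpening of the denominator from $2A+at$ to $2A+at/3$ is impossible, because the inequality as stated in the lemma is false. Using your own substitution $v=\sqrt{1+at/A}$, the supremum equals $A(v-1)^2/a^2$, while the claimed lower bound $t^2/(2(2A+at/3))$ equals $3A(v-1)^2(v+1)^2/(2a^2(v^2+5))$; comparing them reduces to $2(v^2+5)\ge 3(v+1)^2$, i.e.\ $(v-1)(v+7)\le 0$, which fails for every $v>1$, hence for every $t>0$. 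Concretely, at $A=a=t=1$ the supremum is $3-2\sqrt{2}\approx 0.172$ while $t^2/(2(2A+at/3))=3/14\approx 0.214$. Equivalently, $h(u)\ge u^2/(2(1+u/3))$ is false for all $u>0$; the tight quadratic minorant is $h(u)\ge u^2/(2(1+u))$, for which your reduction yields exactly $(v-1)^2\ge 0$.

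So the defect lies in the statement rather than in your strategy: the cited Lemma~11 has denominator $2(2A+at)$ (the ``$/3$'' belongs to Bernstein's inequality, where the scale entering the moment bound is already $b/3$), and your one-line evaluation at $\lambda_0$ already proves that correct version with no further polynomial work. You should record the lemma with $2A+at$ and note that the downstream constants inherit the change: the $(2/3)\ln(2/\delta)\,c$ and $\ln^2(2/\delta)c^2/9$ terms in \Cref{lem:F-concentration-subgamma}, and hence in \Cref{thm:ci-hp} and \Cref{thm:ci-hp-ls}, become $2\ln(2/\delta)\,c$ and $\ln^2(2/\delta)c^2$ respectively. (Minor: the hypothesis ``let $A$ and $\beta$ denote two positive real numbers'' should read $a$, not $\beta$.)
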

Combining the above two results, we have the following concentration inequality:
\begin{lemma}
  \label{lem:F-concentration-subgamma}
  Let $S=(Z_1, \ldots, Z_n) \sim \cN(0, I_n)$.
  Assume that $F : \R^n \to \R$ is a differentiable function.
  Then, assuming that $\|\nabla F(S)\|^2 - \E[\|\nabla F(S)\|^2]$ is a sub-gamma random variable with variance factor $v^2$ and scale $c$, for any $\delta \in (0,1)$ we have,
  \begin{align*}
    \P\cbr{|F(S) - \E[F(S)]| >
    \sqrt{\frac{\pi^2 \ln(2/\delta)}{2} \E[\|\nabla F(S)\|^2] + 2 \ln(2/\delta) v^2 + \frac{\ln^2\pr{2/\delta}}{9} \, c^2}
    +
    \frac{\ln(2/\delta)}{3} \, c
    } \leq \delta
  \end{align*}
\end{lemma}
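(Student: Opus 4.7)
The plan is to chain Pisier's exponential inequality \Cref{lem:F-concentration} with the Bernstein-type Legendre transform lower bound \Cref{lem:ci-hp-help-b}, using the sub-gamma hypothesis on $\|\nabla F(S)\|^2$ as the bridge. I write $U = |F(S) - \E F(S)|$, $V = \|\nabla F(S)\|^2$, $A = \E V$, and $L = \ln(2/\delta)$.

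First I would apply \Cref{lem:F-concentration} to get $\E e^{\lambda U} \leq 2\, \E e^{\mu V}$ with $\mu := \lambda^2 \pi^2/8$. Recentering $V$ and invoking the sub-gamma MGF bound on $V - A$ then yields, for $0 < \mu < 1/c$,
\[
\E e^{\mu V} \leq \exp\!\left( \mu A + \frac{\mu^2 v^2/2}{1 - c \mu}\right),
\]
so that $\ln \E e^{\lambda U} \leq \ln 2 + \mu A + \mu^2 v^2/(2(1 - c \mu))$. The next step is to repackage this into a standard sub-gamma MGF bound directly in $\lambda$, of the form $\ln \E e^{\lambda U} \leq \ln 2 + B \lambda^2/(1 - c \lambda)$ with $B := \pi^2 A/8 + v^2/2$ on $\lambda \in [0, 1/c)$. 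Once that is in hand, I would invoke Chernoff together with \Cref{lem:ci-hp-help-b} (applied with parameters $B$ and $c$) to produce the Bernstein-style tail
\[
\P\{U > t\} \leq 2 \exp\!\left(-\frac{t^2}{2(2B + c t/3)}\right),
\]
and finally set the right-hand side equal to $\delta$, giving the quadratic $t^2 - (2Lc/3)\, t - (\pi^2 L A/2 + 2 L v^2) = 0$, whose positive root is precisely the claimed threshold $t = Lc/3 + \sqrt{L^2 c^2/9 + \pi^2 L A/2 + 2 L v^2}$.

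The hard part will be the bridging step from the bound involving $\mu = \lambda^2 \pi^2/8$ (quartic in $\lambda$) to the clean sub-gamma form $B \lambda^2/(1 - c \lambda)$. The linear piece $\mu A = (\pi^2 A/8)\lambda^2$ is absorbed immediately because $c \lambda \geq 0$, but the correction $\mu^2 v^2/(2(1 - c \mu))$ has to be controlled against $(v^2/2) \lambda^2/(1 - c \lambda)$, which I expect to require either a careful case analysis over $\lambda$ (comparing $1 - c \mu$ with $1 - c \lambda$ on disjoint ranges) or a slightly loosened choice of variance factor and scale. If that estimate turns out to be lossy, a natural alternative is to skip the intermediate sub-gamma repackaging entirely and perform the Chernoff optimization directly on $\lambda t - \mu A - \mu^2 v^2/(2(1 - c \mu))$ by reparameterizing the supremum in $\mu$ and exploiting the same $t^2/(2(2B + c t/3))$ shape that underlies \Cref{lem:ci-hp-help-b}.
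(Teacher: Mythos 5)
Your proposal follows the paper's proof essentially step for step: Pisier's exponential inequality (\Cref{lem:F-concentration}), the sub-gamma MGF bound applied to $\|\nabla F(S)\|^2$, repackaging into a single Bernstein-type MGF bound, Chernoff plus \Cref{lem:ci-hp-help-b}, and the same quadratic inversion yielding exactly the stated threshold. The bridging step you flag as the hard part is the one the paper handles tersely, and it is benign: with $\mu=\lambda^2\pi^2/8$ one has $\mu\le\lambda$ (hence $\mu^2/(1-c\mu)\le\lambda^2/(1-c\lambda)$) on the range $\lambda\le 8/\pi^2$, which gives the clean $\frac{\lambda^2}{1-c\lambda}\left(\frac{\pi^2}{8}\E\|\nabla F(S)\|^2+\frac{v^2}{2}\right)$ form before optimizing over $\lambda$.
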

\begin{proof}
  Consider 
  \Cref{lem:F-concentration}.
  Assuming that $\|\nabla F(S)\|^2$ is a sub-gamma random variable,
\begin{align*}
  \psi_{F(S) - \E[F(S)]}(\lambda)
  &:= \ln \E \exp\pr{\lambda \abs{F(S) - \E[F(S)]}}\\
  &\leq
    \frac{\lambda^2 \pi^2}{8} \E[\|\nabla F(S)\|^2] + \frac{\lambda^2 (v^2/2)}{1-c \lambda}\\
&\leq
    \frac{\lambda^2}{1-c \lambda} \pr{\frac{\pi^2}{8} \E[\|\nabla F(S)\|^2 + \frac{v^2}{2}}~.
\end{align*}
Using the Chernoff's method for any $t, \lambda > 0$
\begin{align*}
  \P\cbr{|F(S) - \E[F(S)]| > t}
  &\leq    
    2 \E \exp\pr{- \pr{\lambda t  - \psi_{F(S) - \E[F(S)]}(\lambda)}  }
\end{align*}
while focusing on the term in $\exp()$ on the right-hand side.
\begin{align*}
  \lambda t  - \psi_{F(S) - \E[F(S)]}(\lambda)
  &\geq
    \lambda t -
    \frac{\lambda^2}{1-c \lambda} \pr{\frac{\pi^2}{8} \E[\|\nabla F(S)\|^2 + \frac{v^2}{2}}
\end{align*}
and we see that $\lambda \in (0, 1] \cap (0, 1/c) = (0, 1/c)$.
Now we apply \Cref{lem:ci-hp-help-b} to maximize the right-hand side in the above over $\lambda \in (0, 1/c)$ and get
\begin{align*}
  \P\cbr{|F(S) - \E[F(S)]| > t}
  \leq
  2 \exp\pr{- \frac{t^2}{(\pi^2/2) \E[\|\nabla F(S)\|^2] + 2 v^2 + (2/3) c t}}
\end{align*}
while inverting this bound yields the statement.
\end{proof}

\begin{proof}[Completing proof of \Cref{thm:ci-hp}.]
  We will apply \Cref{lem:F-concentration-subgamma} with $F(S) = f(\hat \th)$, but first observe that
  \begin{align*}
    \nabla F(S)
    = \pr{\frac{\d}{\d z} f(\hat \th^{(1)}(z)) \bigg|_{z = Z_1}, \ldots, \frac{\d}{\d z} f(\hat \th^{(n)}(z)) \bigg|_{z = Z_n}}
  \end{align*}
and so by \Cref{lem:norm2norm}
\begin{align*}
    \|\nabla F(S)\|^2
    = \sumin \abs{\frac{\d}{\d z} f(\hat \th\repi(z)) \bigg|_{z = Z_i}}^2
    = \frac{1}{n^2} \sumin \|\nabla f(\hat \th\repi(Z_i))\|_{\hat M_i(Z_i)}^2
  \end{align*}
  while taking expectation on both sides and using the fact that $Z_i$'s are identically distributed
  \begin{align*}
    \E \|\nabla F(S)\|^2
    =
    \frac{1}{n^2} \sumin \E \|\nabla f(\hat \th\repi(Z_i))\|_{M_i(Z_i)}^2
    =
    \frac1n \E \|\nabla f(\hat \th)\|_{M(\hat \th)}^2~.
  \end{align*}
By assumption $\|\nabla f(\hat \th)\|_{M(\hat \th)}^2 - \E[\|\nabla f(\hat \th)\|_{M(\hat \th)}^2]$ is a sub-gamma random variable with variance factor $v^2$ and scale $c$, and so $\|\nabla F(S)\|^2 - \E[\|\nabla F(S)\|^2]$ is a sub-gamma random variable with $(v^2/n^2, c/n)$.
  Then, \Cref{lem:F-concentration-subgamma} gives us
    \begin{align*}
    \P\cbr{|f(\hat \th) - \E[f(\hat \th)]| >
    \sqrt{\frac{\pi^2 \ln(2/\delta)}{2 n} \E[\|\nabla f(\hat \th)\|^2_{M(\hat \th)}] + \frac{2 \ln(2/\delta) v^2}{n^2} + \frac{\ln^2\pr{2 / \delta}}{9 n^2} \, c^2}
    +
    \frac{\ln(2/\delta)}{3 n} \, c
    } \leq \delta~.
  \end{align*}
  The proof is now complete.
\end{proof}

 \section{Empirical validation}
\label{sec:experiments}

\newcommand{\methodname}{vhat}

This section details the experimental setup used to evaluate the proposed pointwise confidence estimation method (\Cref{thm:ci-hp-ls} where weighted norm is computed by \Cref{alg:weighted-norm}) against a standard baseline in a controlled regression setting with distributional shift.
We aim to verify the hypothesis that \Cref{thm:ci-hp-ls} results in confidence intervals with good coverage of a non-linear ground-truth function, which are still narrow in the support of the training data.
In particular, we are interested in its behaviour in regions with a lack of the training data.
We will start by looking at a one-dimensional regression example, where we train a neural network to fit the data.
Despite its simplicity, this setting is meaningful because the sensitivity of the confidence bound to the test point
(as captured by the weighted norm)
is measured in a high-dimensional implicit feature space (in other words, the neural network first projects a scalar input into a high-dimensional feature space).
Then, we will look at the performance with inputs  of a higher dimension.

\paragraph{Data}

We consider regression problems of various input dimensions.
Inputs are drawn from a standard normal distribution, $\mathcal{N}(0, I_d)$.
The corresponding output $Y$ is generated according to the model \cref{eq:regression}, where the ground-truth function $f(x; \th^{\star})$ is a function sampled (and fixed throughout) from a Mat\'ern \ac{RKHS}~\citep{RasmussenW06}, providing a smooth, non-linear target function.
For all experiments, the length scale, output scale, and smoothness ($\nu$) parameters are respectively $(1,1,1.5)$.\footnote{For experiments with $d > 1$ with set output scale to $0.1$.}
The standard deviation of the regression noise is $\sigma = 0.1$.%

A key aspect of our experimental design introduces a distributional shift between training and evaluation data, which is inspired by \citep{antoran2022adapting}.
Once the training and validation sets are drawn as described above, we remove all examples whose inputs lie within a box $[-0.5,0.5]^d$. %
The validation set used for hyperparameter search is sampled in the same way as the training set.
Meanwhile, the test inputs are either drawn from the original Gaussian distribution, without removing points (for $d > 1$) or form a uniform grid (for $d=1$).
This way we are able to test sensitivity of the confidence bound to the region of the input space where no observations were made.%

\paragraph{Model and training}
We employ a fully connected neural network with 2 hidden layers ($1024$ and $512$ neurons respectively) and the \texttt{ReLU} activation function in the case of the $d=1$ experiments, and a neural network with $1024$-sized hidden layer for $d > 1$.
The output layer consists of a single neuron.

For $d=1$ we use variance scaling with standard deviation $100$ (this results in a less smooth predictor), while for $d > 1$
we initialize parameters using the Glorot scheme and train a neural network by minimizing the average square loss with $\ell_2$ regularization ($L_{\lambda}$).
The optimization is performed using the AdamW optimizer \citep{loshchilov2018decoupled} with parameters with a step size $\eta=10^{-3}$, and weight decay $\lambda = 10^{-8}$ for $d=1$, $\lambda = 10^{-3}$ for $d = 10$, and $\lambda = 10^{-4}$ for $d=100$.
We utilized a cosine learning rate schedule over the course of training.
The model is trained for a total of $10^4$ passes over the training data (without mini-batching).
All experiments are performed on a platform with an Nvidia H$100$ GPU.

The hyperparameter $\lambda$ is tuned by minimizing the mean Winkler interval score (see below) on the validation set.
Experiments in \Cref{tab:d10,tab:d100} are performed on $5$ independent trials with respect to draws of the data, while the tables report metric averages or medians with standard deviations over these trials.

\paragraph{Confidence estimation}
We evaluate pointwise confidence intervals generated by our proposed method. As a baseline for comparison, we consider the standard non-parametric bootstrap method \citep{efron1994introduction}. For the bootstrap, we generated $10$ bootstrap replicates of the training dataset sampled with replacement, trained a separate network on each replicate using the same architecture and training procedure, and derived empirical confidence intervals, using the quantile method with $\alpha = 0.01$, from the resulting ensemble of predictions at each test point.
Confidence intervals constructed from the bootstrap are a good baseline as many other methods based on ensembles can be viewed as a proxy to it~\citep{lakshminarayanan2017simple,gal2016dropout,osband2021epistemic}.
The `proposed method' is a  bound in \Cref{thm:ci-hp-ls} with $\nu = c = 1$ and $\delta = 0.01 / \text{\# test points}$ (where the latter divison accounts for a union bound over test points).
In particular, to compute the weighted norm $\|\nabla_{\th} f(\xtest; \hat \th)\|_{M(\hat \th)}^2$ we used \Cref{alg:weighted-norm} with $10^3$ \ac{CG} iterations and tolerance $10^{-12}$ (we noticed that \ac{CG} often requires a $64$-bit precision to converge reliably).
\paragraph{Evaluation}
The primary goal is to assess the quality of the pointwise confidence intervals, particularly within the unobserved regions.
For the $d=1$ case, we visualize the results in \Cref{fig:vhat}.
In this case, we provide plots for various sample sizes.
The expectation is that a reliable estimate should yield wider CIs in regions without observations compared to regions well-covered by training data.

In problems with $d > 1$, we rely on metrics such as coverage, width, and their combination as a single metric known as the mean Winkler interval score~\citep{winkler1972decision}.
Here, coverage is computed as an average of indicators
$\ind\{\textsc{lb}({\xtest}) \leq f(\xtest; \th^{\star}) \leq \textsc{ub}({\xtest})\}$
over all test inputs $\xtest$ where $\textsc{ub}()$ and $\textsc{lb}()$ are upper and lower bounds.
The width is computed as
an average of widths $\textsc{ub}(x_{\text{nn}}(\xtest)) - \textsc{lb}(x_{\text{nn}}(\xtest))$ over all test inputs $\xtest$, where $x_{\text{nn}}(\xtest)$ is a nearest neighbour of $\xtest$ among all training inputs (but with filtering out outliers).\footnote{Let $\ve$ be a $99$th percentile of
$\cbr{\min_j \|x_i - x_j^{\text{test}}\| ~:~ i \in [n]}$.
Then, evaluation is done on all inputs indexed by
$\cbr{\argmin_j \|x_i - x_j^{\text{test}}\| ~:~ i \in [n]~, \, \min_j \|x_i - \xtest_j^{\text{test}}\| \leq \ve}$.}
This ensures that we compute the width only in the regions where we have sufficiently many observations.
Finally, the mean Winkler interval score is defined as %
$ W_{\alpha} = (1/n) \sum_i [ (\textsc{ub}(x_i) - \textsc{lb}(x_i)) + (2/\alpha) \max(0, Y_i - \textsc{ub}(x_i)) + (2/\alpha) \max(0, \textsc{lb}(x_i) - Y_i)  ]
$ where $\alpha = 1 - \text{coverage}$. %
Clearly, $W_{\alpha}$ is small when we have a good coverage, yet intervals are narrow,
and it can be computed on any sample, which makes it useful for hyperparameter selection.
\paragraph{Discussion}
First, focusing on \Cref{fig:vhat}, we indeed observe that the proposed method yields wide intervals in regions where few to none observations were made (see left and right sides of the graph as well as the central `cut-out' part), thus confirming our hypothesis on weighted norms.
At the same time, intervals provide an adequate coverage where many observations are available, despite the fact that we do not control the bias term.
Second, confidence intervals seem to narrow with increase of the sample size, which is an expected behavior.

In contrast, and as we can see in the second row of \Cref{fig:vhat}, bootstrap fails to produce a good confidence interval on this task even when $n$ is large.
This is a common problem with bootstrap, which might yield an overconfident intervals and would require a correction term which is not provided by asymptotic confidence intervals.
Looking at higher dimensional cases in \Cref{tab:d10,tab:d100} and \Cref{fig:pareto}, we note that the median width is decreasing with an increase in the sample size, while maintaining a good coverage of the ground truth.
In contrast, similarly to the $d=1$ case, bootstrap confidence intervals again demonstrate over-confidence, which results in a reduced coverage.
Finally, \Cref{fig:pareto} is intended to gauge the sensitivity of the proposed method to the choice of the regularization parameter $\lambda \in \{10^{-6}, 10^{-5}, \ldots, 10^{-2}\}$.
This figure also captures the trade-off between coverage and CI width, with the best outcome being the top-left corner.
It is evident that $\lambda$ controls this trade-off and can be tuned in practice to obtain the trade-off as close as possible to the top-left corner.
On the other hand, bootstrap appears to be insensitive to various choices of $\lambda$ possibly because the underlying neural network bootstrap replications are not much affected by $\lambda$ in terms of their smoothness.

\newcommand{\figuresize}{0.3}
\begin{figure} \center
\begin{subfigure}{\figuresize\textwidth}
\includegraphics[width=\textwidth]{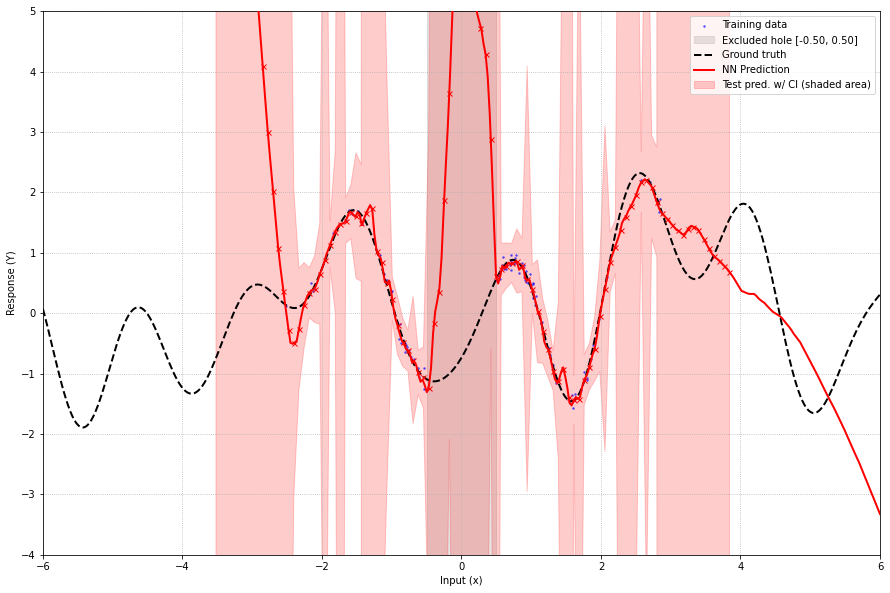}
\end{subfigure}
\begin{subfigure}{\figuresize\textwidth}
\includegraphics[width=\textwidth]{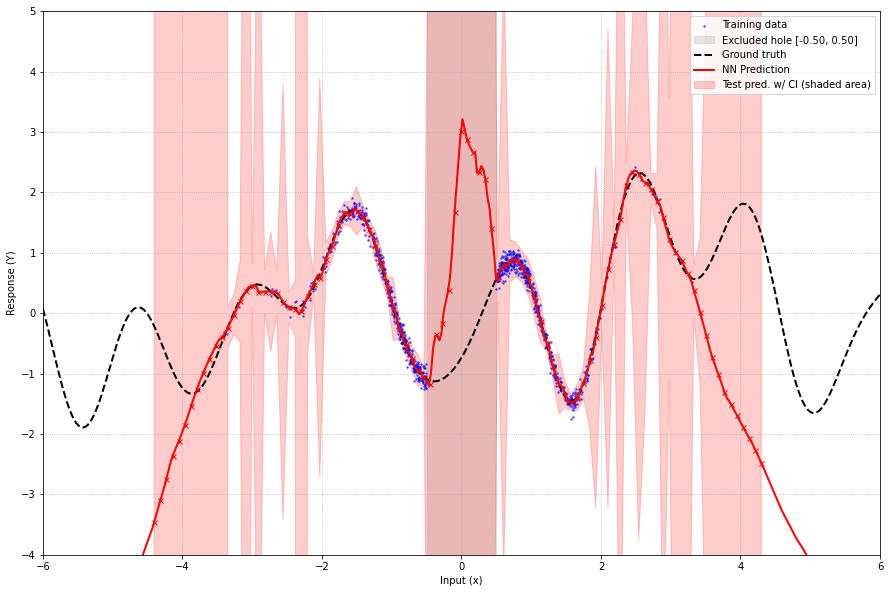}
\end{subfigure}
\begin{subfigure}{\figuresize\textwidth}
\includegraphics[width=\textwidth]{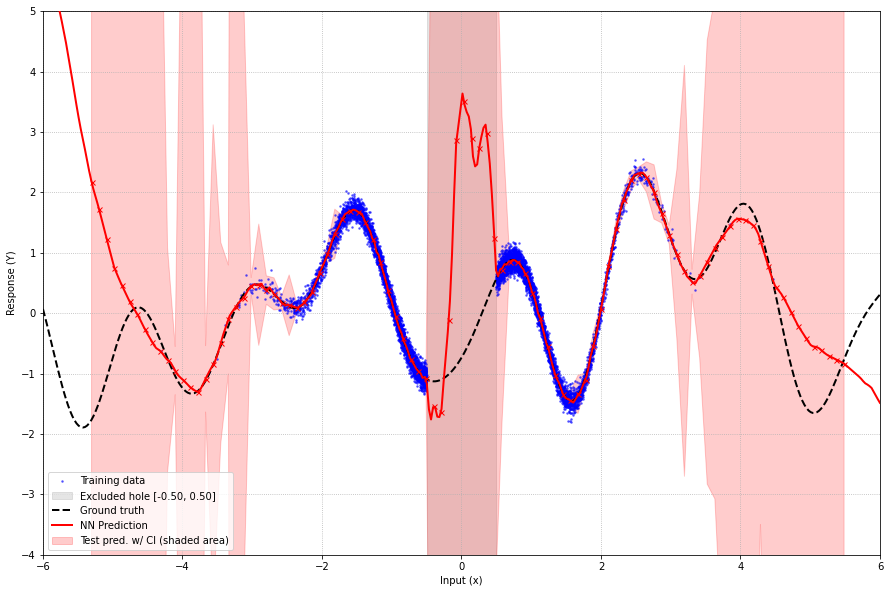}
\end{subfigure}
\begin{subfigure}{\figuresize\textwidth}
\includegraphics[width=\textwidth]{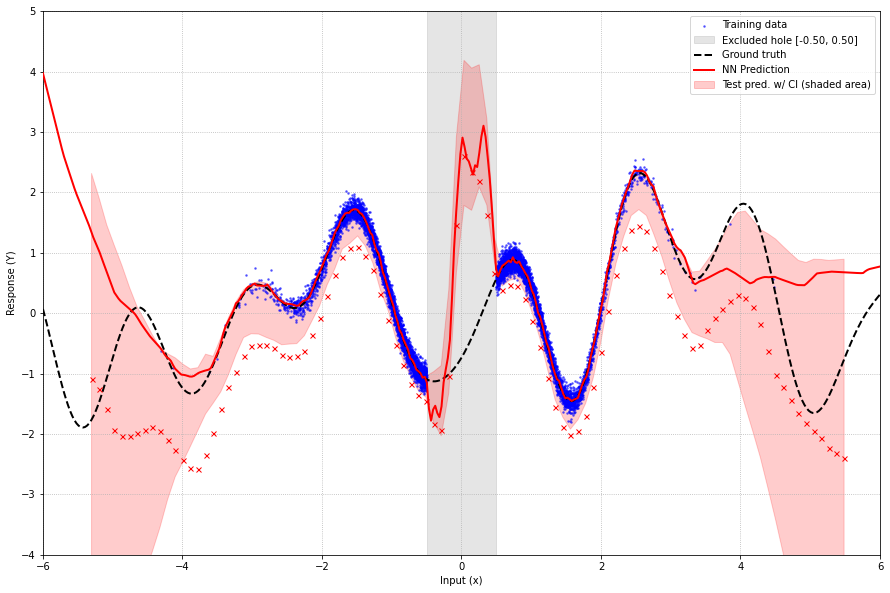}
\caption*{$n = 100$}
\end{subfigure}
\begin{subfigure}{\figuresize\textwidth}
\includegraphics[width=\textwidth]{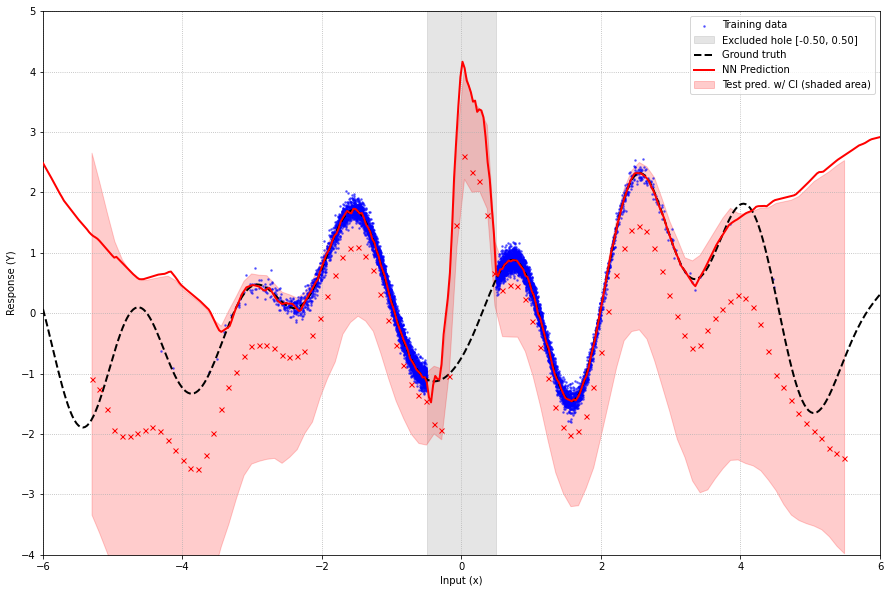}
\caption*{$n = 1000$}
\end{subfigure}
\begin{subfigure}{\figuresize\textwidth}
\includegraphics[width=\textwidth]{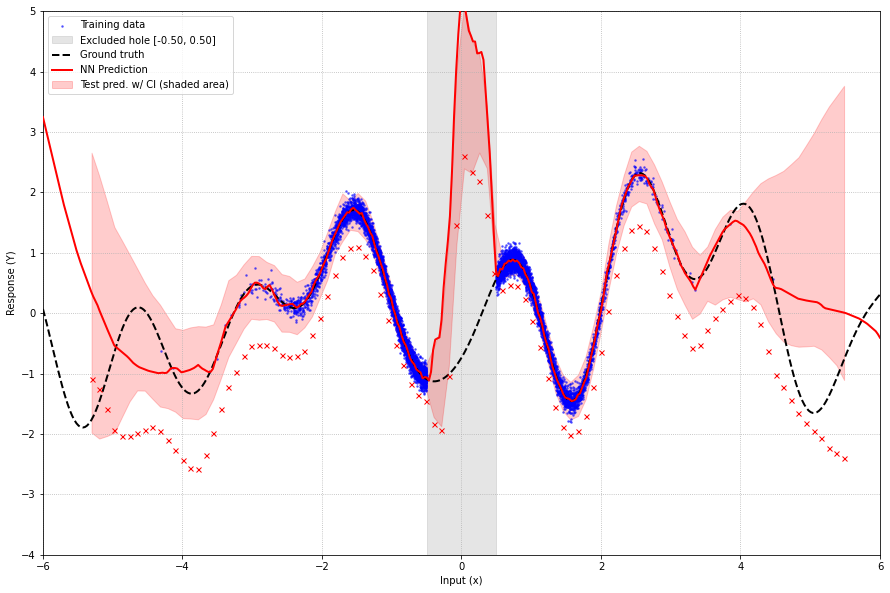}
\caption*{$n = 10000$}
\end{subfigure}
\caption{The first row: proposed confidence interval evaluation with increasing sample size.
  The second row: bootstrap confidence interval evaluation. In this case confidence intervals are formed from quantiles over predictions from $10$ models trained on draws from the training sample with replacement.
}
\label{fig:vhat}
\end{figure}

\renewcommand{\arraystretch}{1.2} \newcommand{\scaletables}{0.75}

\begin{table}[H] \centering
  \caption{Proposed method vs bootstrap quantile CI on a $10$-dimensional problem, $\delta = 0.01$, with $\lambda=10^{-3}$ in all cases.}
  \label{tab:d10}
\scalebox{\scaletables}{
\begin{tabular}{l | l cccccc}
\toprule
& N Train & Test MSE & Avg CI & Median CI & Mean Interval Score & CI Cover \\
&         &          & Width  & Width     & ($W_{0.01}$)      & (\%)     \\
\midrule
\multirow{4}{*}{\rotatebox[origin=c]{90}{Proposed}} &
100   & $0.0312 \pm 0.0046$ & $2.1295 \pm 1.2149$ & $0.5162 \pm 0.0920$ & $2.3052 \pm 1.0058$ & $99.60\% \pm 0.49\%$ \\
& 1000  & $0.0246 \pm 0.0054$ & $0.4846 \pm 0.0120$ & $0.2418 \pm 0.0131$ & $2.9416 \pm 1.1417$ & $89.80\% \pm 2.32\%$ \\
& 10000 & $0.0270 \pm 0.0043$ & $1.0030 \pm 0.5659$ & $0.3010 \pm 0.0840$ & $2.1816 \pm 1.1668$ & $96.00\% \pm 4.77\%$ \\
& 20000 & $0.0233 \pm 0.0049$ & $0.6623 \pm 0.3477$ & $0.2333 \pm 0.0891$ & $3.5615 \pm 2.3433$ & $89.60\% \pm 10.67\%$ \\
\midrule
\multirow{4}{*}{\rotatebox[origin=c]{90}{Bootstrap}} &
100   & $0.0329 \pm 0.0059$ & $0.2356 \pm 0.0246$ & $0.1132 \pm 0.0125$ & $9.7120 \pm 0.7665$ & $62.25\% \pm 4.44\%$ \\
& 1000  & $0.0246 \pm 0.0022$ & $0.1724 \pm 0.0063$ & $0.0820 \pm 0.0036$ & $10.4633 \pm 0.9745$ & $56.60\% \pm 4.45\%$ \\
& 10000 & $0.0332 \pm 0.0027$ & $0.3059 \pm 0.0077$ & $0.1453 \pm 0.0041$ & $5.6643 \pm 0.9700$ & $82.80\% \pm 4.31\%$ \\
& 20000 & $0.0270 \pm 0.0033$ & $0.2964 \pm 0.0072$ & $0.1401 \pm 0.0035$ & $5.4321 \pm 0.6177$ & $83.00\% \pm 2.10\%$ \\
\bottomrule
\end{tabular}
}
\end{table}

\begin{table}[H] \centering
\caption{Same setting as in \Cref{tab:d10} but on a $100$-dimensional problem, with $\lambda=10^{-4}$ in all cases.}
  \label{tab:d100}
  \centering  
  \scalebox{\scaletables}{
\begin{tabular}{l | l cccccc}
\toprule
& N Train & Test MSE & Avg CI & Median CI & Mean Interval Score & CI Cover \\
&         &          & Width  & Width     & ($W_{0.01}$)      & (\%)     \\
\midrule
\multirow{4}{*}{\rotatebox[origin=c]{90}{Proposed}} &
100   & $0.0734 \pm 0.0071$ & $3.4774 \pm 4.5786$ & $0.7123 \pm 0.7343$ & $8.2162 \pm 2.3118$ & $84.40\% \pm 9.52\%$ \\
& 1000  & $0.0406 \pm 0.0053$ & $0.6456 \pm 0.1690$ & $0.2281 \pm 0.0658$ & $7.6975 \pm 1.2885$ & $79.60\% \pm 2.73\%$ \\
& 10000 & $0.0279 \pm 0.0027$ & $0.7453 \pm 0.3018$ & $0.3425 \pm 0.0777$ & $1.9889 \pm 0.2441$ & $96.00\% \pm 2.19\%$ \\
& 20000 & $0.0268 \pm 0.0035$ & $0.6712 \pm 0.1748$ & $0.3042 \pm 0.0083$ & $1.7652 \pm 0.3218$ & $94.40\% \pm 1.36\%$ \\
\midrule
\multirow{4}{*}{\rotatebox[origin=c]{90}{Bootstrap}} &
100   & $0.0666 \pm 0.0053$ & $0.2128 \pm 0.0141$ & $0.1029 \pm 0.0077$ & $23.5971 \pm 3.0576$ & $31.20\% \pm 4.79\%$ \\
& 1000  & $0.0406 \pm 0.0067$ & $0.1773 \pm 0.0072$ & $0.0856 \pm 0.0044$ & $18.0165 \pm 1.7836$ & $42.20\% \pm 7.39\%$ \\
& 10000 & $0.0345 \pm 0.0035$ & $0.2348 \pm 0.0048$ & $0.1141 \pm 0.0013$ & $11.1751 \pm 1.4537$ & $59.20\% \pm 4.45\%$ \\
& 20000 & $0.0309 \pm 0.0060$ & $0.2254 \pm 0.0030$ & $0.1091 \pm 0.0024$ & $10.4342 \pm 2.1573$ & $60.80\% \pm 5.78\%$ \\
\bottomrule
\end{tabular}    
}
\end{table}

\newcommand{\paretofiguresize}{0.25}
\begin{figure}[H]
  \center
\begin{subfigure}{\paretofiguresize\textwidth}
\includegraphics[width=\textwidth]{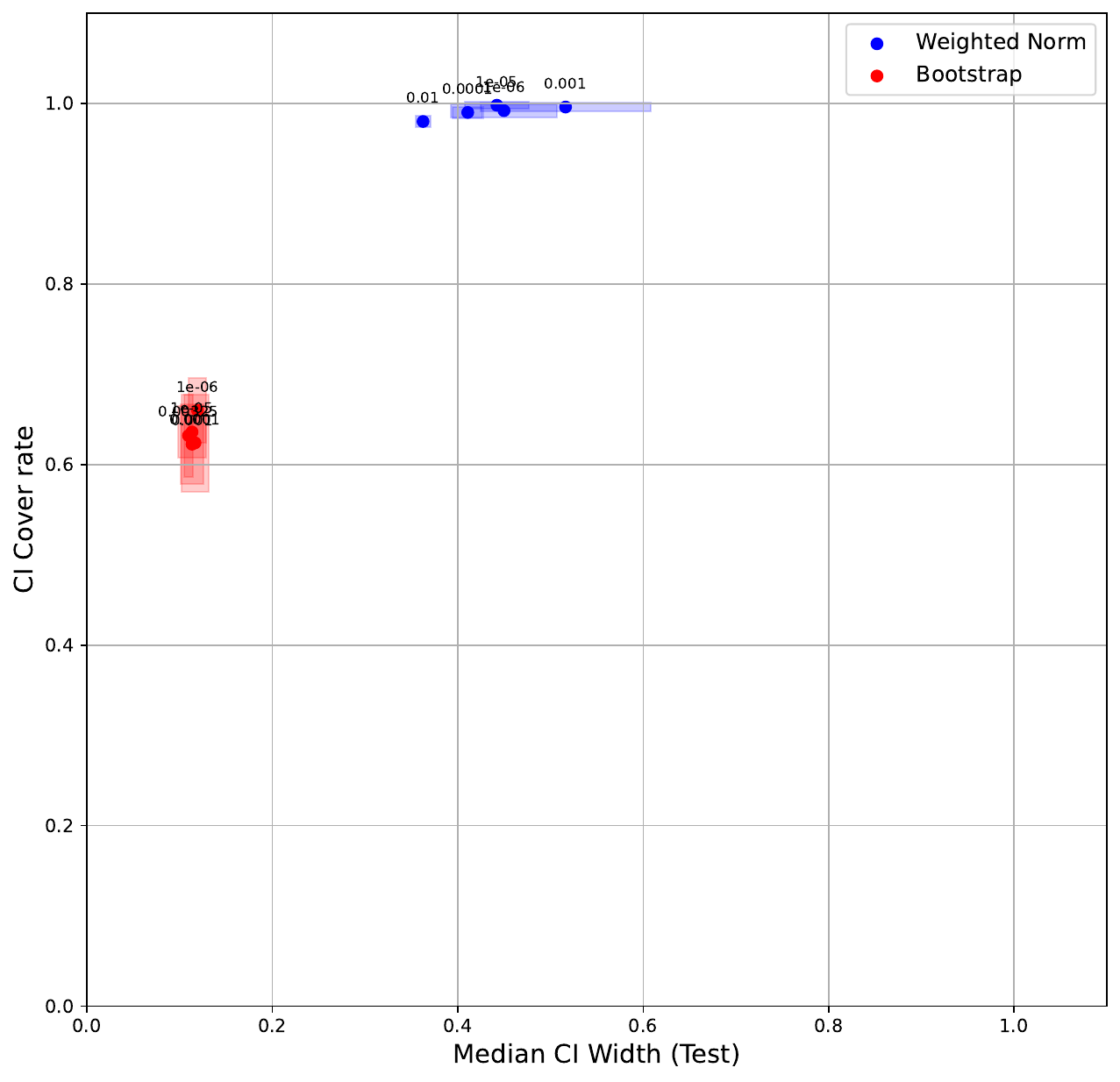}
\end{subfigure}
\begin{subfigure}{\paretofiguresize\textwidth}
\includegraphics[width=\textwidth]{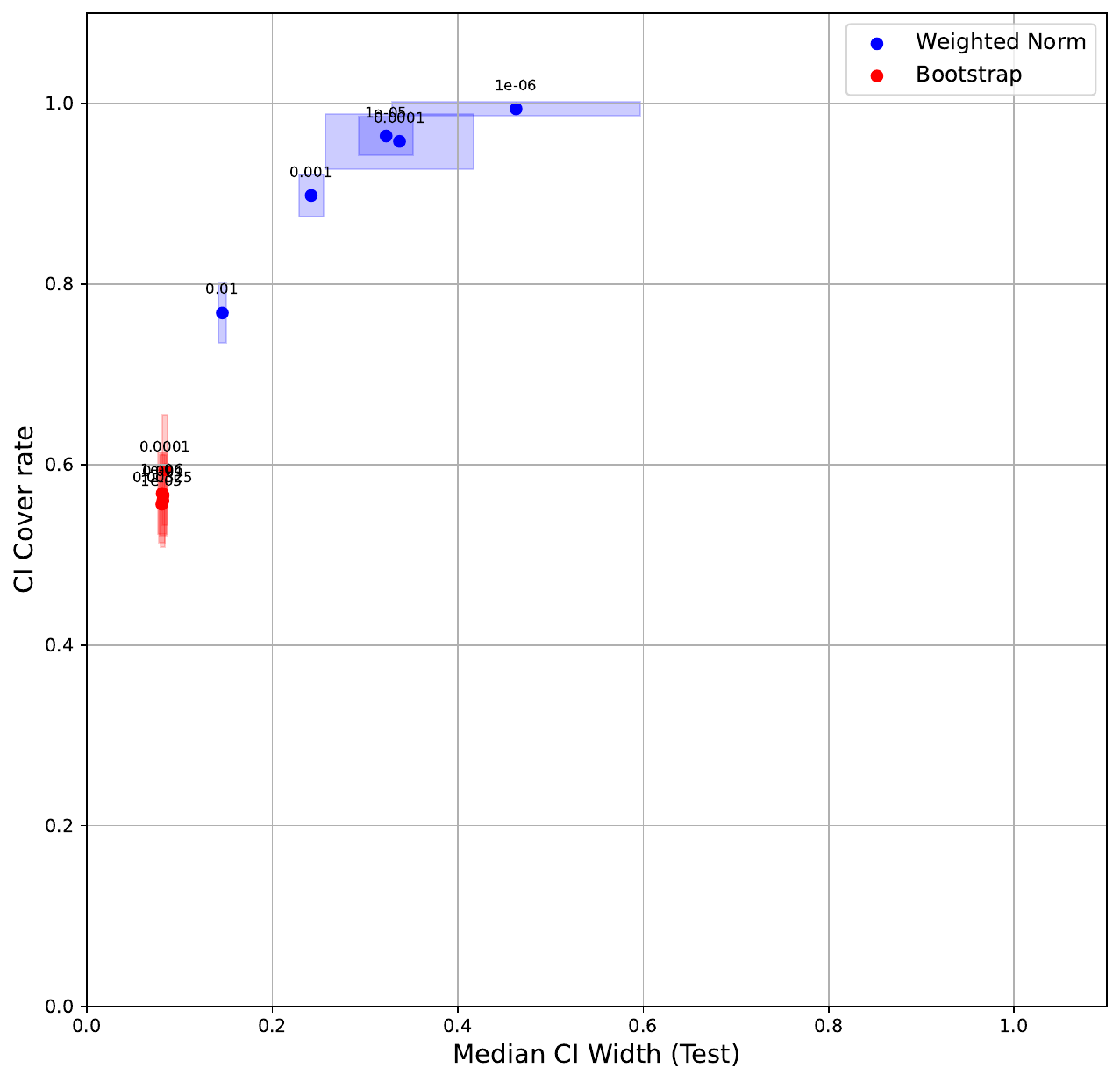}
\end{subfigure}
\begin{subfigure}{\paretofiguresize\textwidth}
\includegraphics[width=\textwidth]{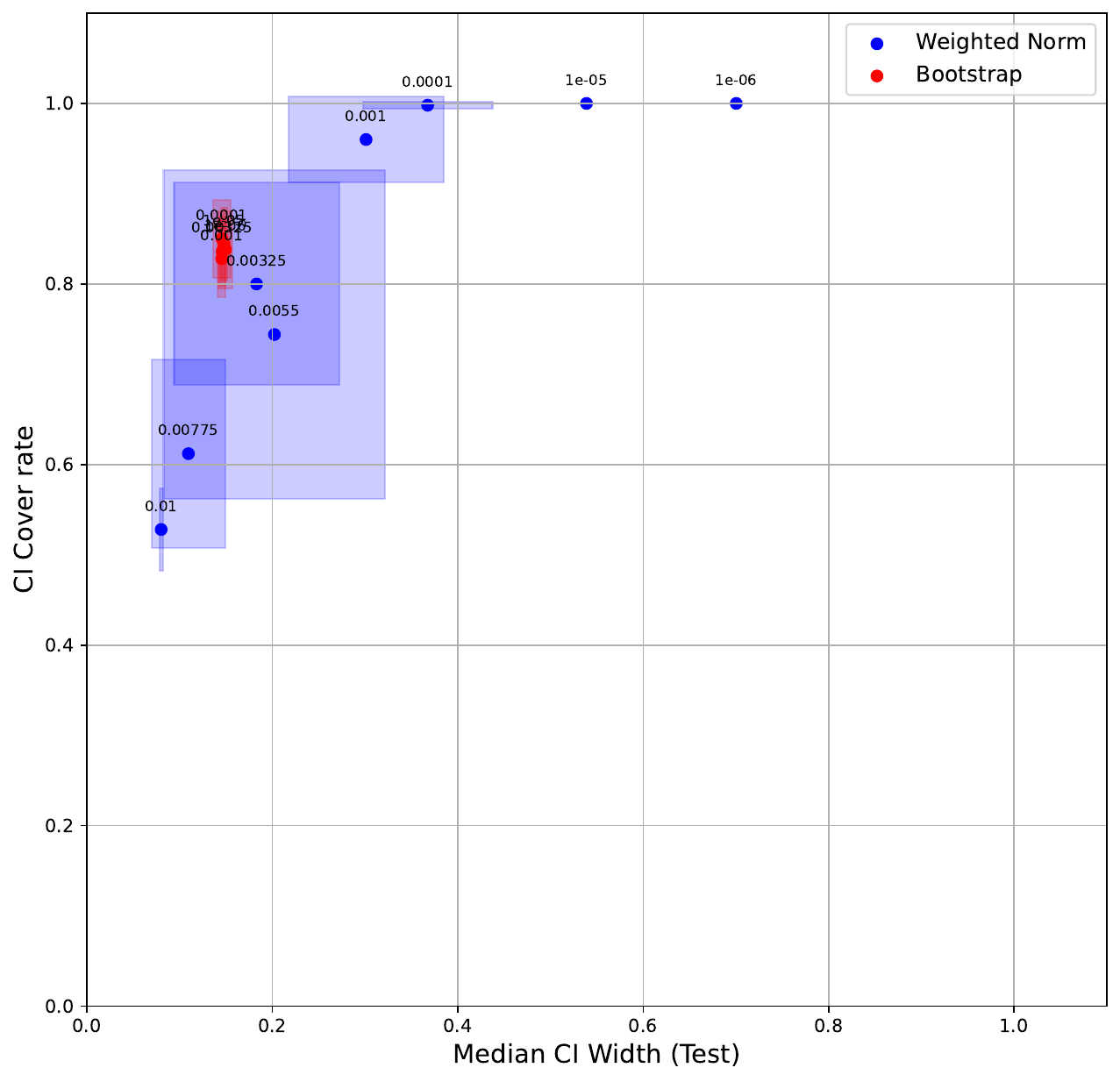}
\end{subfigure}
\begin{subfigure}{\paretofiguresize\textwidth}
\includegraphics[width=\textwidth]{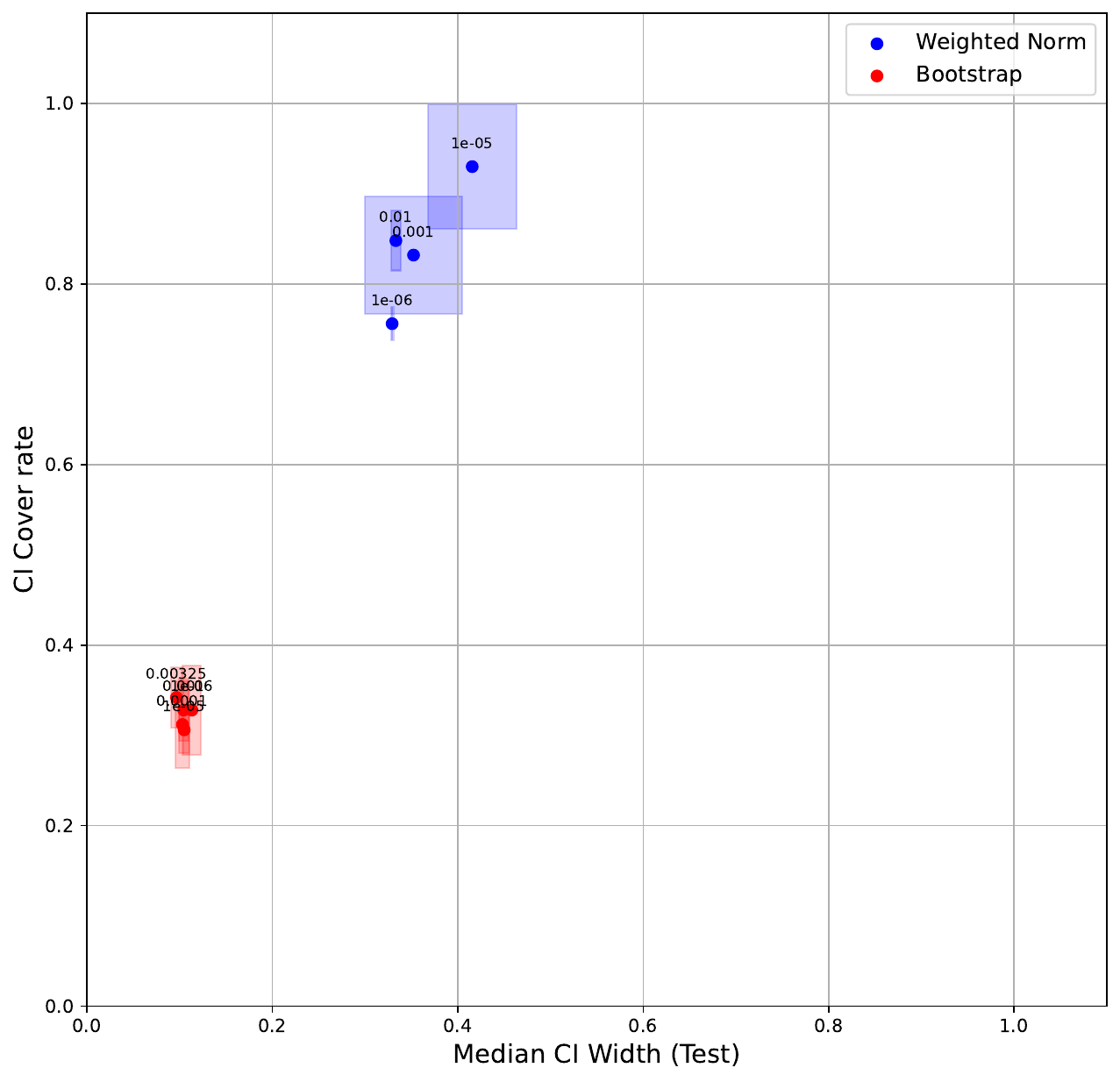}
\caption*{$n = 100$}
\end{subfigure}
\begin{subfigure}{\paretofiguresize\textwidth}
\includegraphics[width=\textwidth]{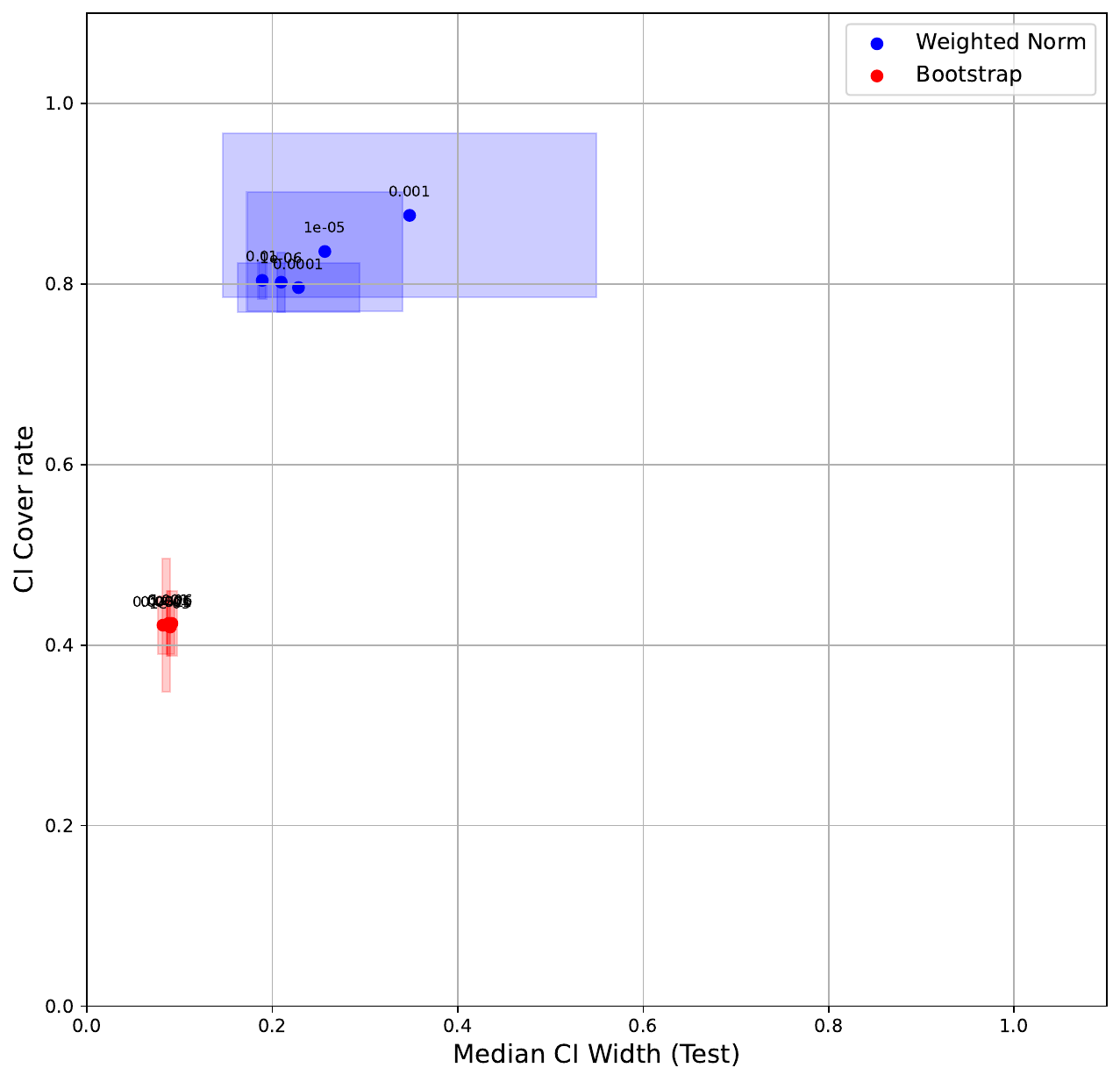}
\caption*{$n = 1000$}
\end{subfigure}
\begin{subfigure}{\paretofiguresize\textwidth}
\includegraphics[width=\textwidth]{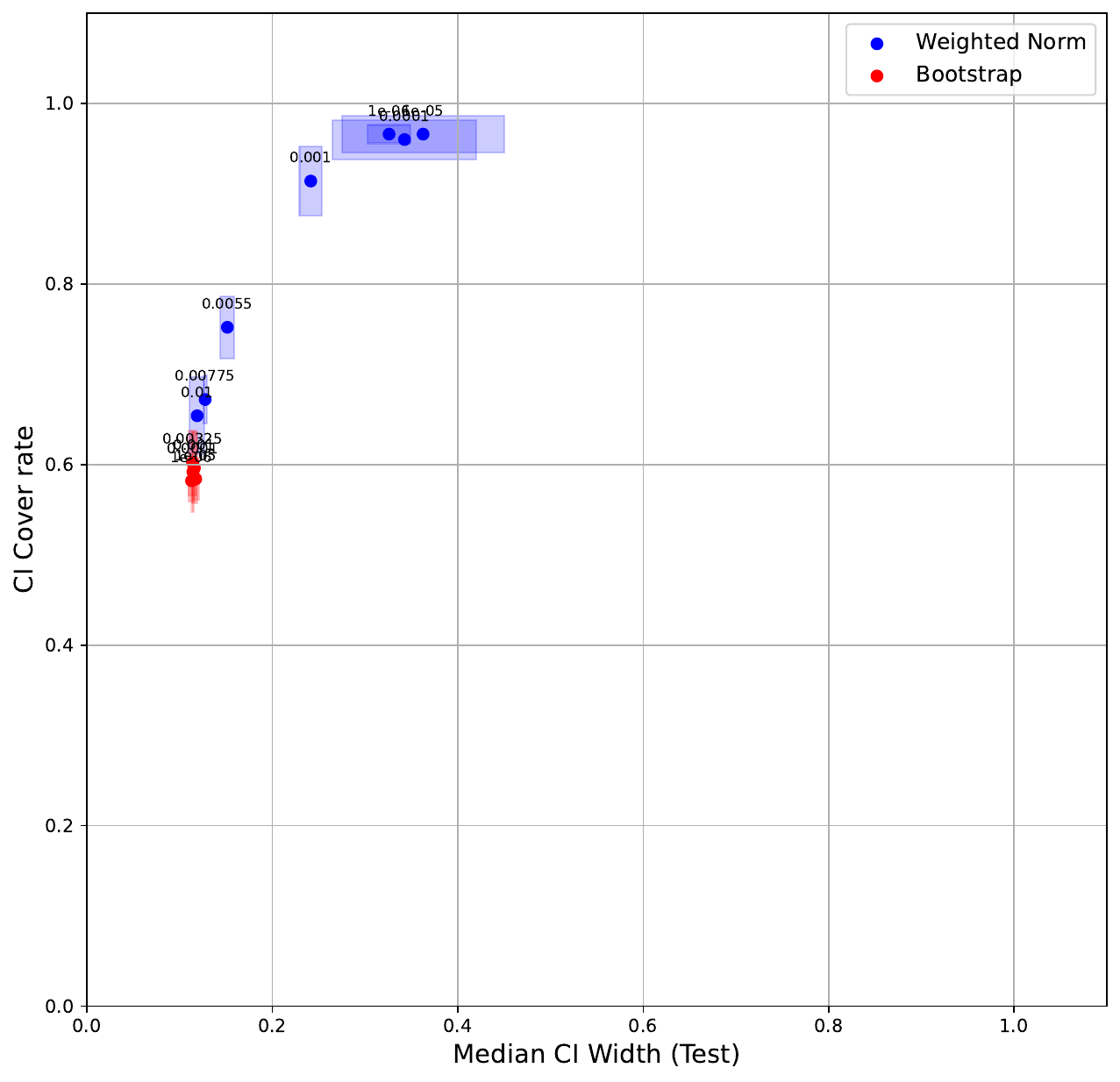}
\caption*{$n = 10000$}
\end{subfigure}
\caption{Trade-off between CI width and coverage,
  when varying regularization parameter $\lambda \in \{10^{-6}, 10^{-5}, \ldots, 10^{-2}\}$.
  The best outcome is the top-left corner.
  Each figure captures median CI width on the test set (horizontal axis) against coverage rate (vertical axis).
  Each point corresponds to evaluation given a specific regularization parameter $\lambda$ (the text label attached to each point is $\lambda$).
  In particular, a point is an average of evaluations, while the shaded the region represents standard deviations.
  Blue points stand for the proposed method while red points are bootstrap.
  The first row corresponds to $d=10$ while the second to $d=100$.
}
\label{fig:pareto}
\end{figure}

\clearpage
\bibliographystyle{plainnat}
\bibliography{learning}

\clearpage
\appendix
\section{Additional related work}
\label{sec:related}
\textbf{\emph{Confidence intervals from asymptotic normality of MLE}}
The standard theory of parametric statistics establishes that \ac{MLE}, $\hat{\theta}_n$, are asymptotically normal~\citep{van2000asymptotic}.
This pivotal result is often stated as $\sqrt{n}(\hat{\theta}_n - \theta^{\star}) \to \cN(0, \cI(\theta^{\star})^{-1})$ in distribution,
where $\theta^{\star}$ is the true parameter value and $\cI(\theta^{\star})$ is the Fisher Information Matrix. One of the most prevalent confidence sets based on this result are \emph{Wald-type confidence sets} that typically take an ellipsoidal form for a vector parameter $\theta$:
$$ \cbr{ \theta \in \Theta : n(\hat{\theta}_n - \theta)\tp \hat{\cI}_n (\hat{\theta}_n - \theta) \leq \chi^2_{k, \alpha} } $$
where $\chi^2_{k, \alpha}$ is the upper $\alpha$-quantile of the chi-squared distribution with $k$ degrees of freedom (the dimension of $\theta$) and where $\hat{\cI}_n$ is an estimate of the Fisher Information Matrix $\cI(\theta^{\star})$.

Despite their simplicity, Wald-type confidence sets often exhibit poor finite sample performance, leading to unreliable coverage probabilities, and can produce intervals/sets that extend beyond valid parameter boundaries, particularly when the true parameter is near a boundary. Furthermore, they lack invariance to parameter transformations, which can lead to different inferential conclusions based on the chosen parameterization.

In this work we do not explicitly construct confidence sets for parameters, but instead give confidence intervals for the predictor on a given test point directly.
However, our results are related to asymptotic normality of MLE, in a sense that the shape of the confidence interval is elliptical in nature as well, and is characterized by the weighting matrix that is different from $\cI(\th^{\star})$ (but matches it asymptotically under certain conditions).
One major distinction is that our result is non-asymptotic and so must enjoy better coverage in the presence of a small sample.
In \Cref{sec:connection-to-mle} we discuss two approaches more formally.

\textbf{\emph{Bootstrap}}
is a classical tool in statistics~\citep{efron1994introduction}, often treated as a gold-standard method for the construction of confidence sets when dealing with complex estimation problems where no other well-known alternatives are available.
The construction of confidence sets using bootstrap is often done through quantiles.
The fundamental principle involves approximating the unknown sampling distribution of an estimator, say $\hat{\theta}_n$,
with the empirical distribution of bootstrap replications. Confidence intervals are then formed by taking the appropriate quantiles from this bootstrap distribution. For instance, the percentile method, a direct quantile-based approach, uses the $\alpha/2$ and $1-\alpha/2$ quantiles of the bootstrap distribution of $\hat{\theta}_n^*$ to form a $1-\alpha$ confidence interval.
Standard textbooks such as \citep{van2000asymptotic,shao2003mathematical,wasserman2004all} establish the asymptotic validity of such methods, demonstrating that under regularity conditions, the bootstrap quantiles asymptotically mimic the quantiles of the true sampling distribution, leading to valid confidence sets. This method is particularly attractive because it is nonparametric and does not require explicit knowledge of the estimator's limiting distribution, making it broadly applicable even when analytic forms are intractable.
On the other hand, the bootstrap may fail to provide accurate coverage in small samples or in cases where the estimator is not asymptotically normal, as the convergence of the bootstrap distribution to the true distribution can be slow or even inconsistent in such settings. Moreover, the bootstrap can be sensitive to the presence of \emph{bias} and to the structure of the data, such as dependence or heavy tails, which may result in under- or overcoverage of the true parameter. In particular, the percentile method (using direct quantiles of the bootstrap distribution) can be suboptimal. There are alternative approaches, such as bias-corrected or studentized intervals, which may yield improved performance.
Another major limitation of bootstrap is the need to have a sufficiently many replications to reduce the variance,
which is extremely prohibitive in many applications, especially involving neural networks with millions or billions of parameters.

\textbf{\emph{Ensemble methods}} represent a family of techniques that draw upon the concept of bootstrap for estimating confidence. These methods involve training multiple predictors, each on a slightly altered version of the original data (for a comprehensive review on ensembling, see also \citep{dwaracherla2022ensembles}). Common approaches include utilizing different subsamples of the dataset for each model (bootstrap) and perturbing the observations \citep{osband2016deep}, or directly perturbing the model parameters~\citep{lakshminarayanan2017simple,gal2016dropout,osband2018randomized,osband2021epistemic}.
The empirical distribution derived from the resulting estimates is subsequently employed to construct confidence intervals.
While many of these methods can be viewed as sample-based approximations to Bayesian approaches, the selection of model hyperparameters (such as the scale of perturbations or the learning rate for models) in ensemble methods is typically performed using validation on withheld data (a subset of the training data).
Many of these methods are heuristic and seldom offer rigorous bounds for the construction of confidence sets.
However, while good performance might be expected when test inputs share a similar distribution with the training set, the resulting confidence intervals for test points distant from the training data may be unreliable.
Similar to bootstrap, these methods necessitate training multiple predictors from scratch, a requirement that does not align well with applications involving a large number of parameters.

\textbf{\emph{Bayesian methods}} are designed by assuming a prior distribution over the ground-truth (regression) function. This prior, when combined with observed data through a specified likelihood function, yields a posterior distribution. From this posterior, one can then compute \emph{credible intervals}~\citep{berger1985statistical}. Note that this is a rather different from what is considered in this work.
Prominent methods within the machine learning literature in this category include Bayes-by-Backprop \citep[BBB,][]{blundell2015weight}, the linearized Laplace method~\citep{Mackay1992Thesis,Immer2021Improving,antoran2022adapting}, and Stochastic Gradient MCMC \citep{nemeth2021stochastic}.

The linearized Laplace method is related to confidence intervals obtained from asymptotic normality of MLE, and to our approch, as it involves weighting by the inverse of the (regularized) Hessian matrix.
Note that (similarly to the case of the normality of the MLE) these results are asymptotic and rely on the Laplace approximation of the posterior.
Moreover, these results generally do not argue about high-probability credible intervals.
Here, in contrast, we develop high-probability non-asymptotic bounds from the frequentist viewpoint that do not require any approximations at all.
Interestingly, the shape of the weighting matrix is slightly different in our case.

\textbf{\emph{Confidence intervals for non-linear least-squares by solving constrained optimization problems}}
Several works explore an approach where the prediction of the ground truth function on $\xtest$ lies in the interval
$[\min_{\th \in \hat \Theta} f(\xtest; \th), \max_{\th \in \hat \Theta} f(\xtest; \th)]$ with high probability for appropriately chosen data-dependent confidence set $\hat \Theta$.
These methods were explored in the context of non-linear least squares~\citep{russo2014learning,foster2018practical,kong2021online,ayoub2020model}. However, in all of these cases, parameter confidence sets are constructed based on the conservative (data-free) covering number estimates. While this approach proved useful in the design of reinforcement learning and bandit algorithms for under-parameterized problems, for overparameterized problems this leads to vacuous estimates. A related line of work concerns the construction of confidence intervals for non-linear ground truth function, assuming that they lie in the \ac{RKHS} of a kernel function~\citep{chowdhury2017kernelized,csaji2019distribution}.
 \clearpage

\section{Details from \Cref{sec:preliminaries}}
\label{sec:app:preliminaries}
\paragraph{Alignment achieved by Gradient Descent}
In \Cref{sec:preliminaries} we discussed that any optimization algorithm that converges to the stationary point satisfies the alignment condition \Cref{lem:alignment}.
In this section, as an example, we demonstrate this for the \ac{GD} algorithm.

In particular, given the step size $\eta > 0$ and initial
parameter vector $\th_0 \in \R^p$ such that $L_{\lambda}(\th_0) < \infty$,
the parameter vector $\hat \th$ is obtained as the limit of a
sequence $\th_1, \th_2, \ldots$ defined through the standard \ac{GD} update rule
\begin{align*}
  \th_{t+1} = (1-\lambda \eta) \th_t - \eta \nabla L(\th_t) \qquad \text{for} \quad t=1,2,\ldots~.
\end{align*}
It is well known that the sequence converges under mild conditions, and so $\hat \th$ is a \emph{stationary point}.
This is implied by the well-known descent lemma:
\begin{lemma}[Descent lemma]
  Assuming that $\nabla L_\lambda$ is $\beta$-Lipschitz w.r.t.\ $L2$ norm
  and that $\eta$ satisfies $\eta \leq 1/\beta$, we have
\begin{align*}
  \frac{1}{2\beta} \|\nabla L_\lambda(\theta_t) \|^2 \le L_\lambda(\theta_t) - L_\lambda(\theta_{t+1}) \;.
\end{align*}
\end{lemma}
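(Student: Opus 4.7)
The argument is the classical one for gradient descent on a smooth objective, and has three ingredients: (i) translating $\beta$-Lipschitzness of $\nabla L_\lambda$ into a quadratic upper bound on $L_\lambda$, (ii) plugging in the GD iterate, and (iii) using the step size assumption to absorb the second-order term.

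First, I would observe that the update in the statement can be rewritten as $\theta_{t+1} = \theta_t - \eta \nabla L_\lambda(\theta_t)$, since $\nabla L_\lambda(\theta) = \nabla L(\theta) + \lambda \theta$ and hence $(1-\lambda\eta)\theta_t - \eta \nabla L(\theta_t) = \theta_t - \eta(\lambda \theta_t + \nabla L(\theta_t))$. Next, from the $\beta$-Lipschitzness of $\nabla L_\lambda$, I would derive the standard quadratic upper bound
\begin{equation*}
L_\lambda(y) \le L_\lambda(x) + \langle \nabla L_\lambda(x), y-x\rangle + \frac{\beta}{2}\|y-x\|^2 \qquad (x,y \in \mathbb{R}^p),
\end{equation*}
which follows by writing $L_\lambda(y) - L_\lambda(x) = \int_0^1 \langle \nabla L_\lambda(x + s(y-x)), y-x\rangle \diff s$, subtracting $\langle \nabla L_\lambda(x), y-x \rangle$, and bounding the remainder using Cauchy--Schwarz together with $\|\nabla L_\lambda(x + s(y-x)) - \nabla L_\lambda(x)\| \le \beta s \|y-x\|$.

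Plugging $x = \theta_t$ and $y = \theta_{t+1} = \theta_t - \eta \nabla L_\lambda(\theta_t)$ into this inequality yields
\begin{equation*}
L_\lambda(\theta_{t+1}) \le L_\lambda(\theta_t) - \eta\,\|\nabla L_\lambda(\theta_t)\|^2 + \frac{\beta \eta^2}{2}\,\|\nabla L_\lambda(\theta_t)\|^2 = L_\lambda(\theta_t) - \eta\!\left(1 - \tfrac{\beta \eta}{2}\right)\!\|\nabla L_\lambda(\theta_t)\|^2.
\end{equation*}
Choosing $\eta = 1/\beta$ (the largest admissible value under $\eta \le 1/\beta$) gives the factor $\eta(1 - \beta\eta/2) = 1/(2\beta)$, so that $L_\lambda(\theta_t) - L_\lambda(\theta_{t+1}) \ge \tfrac{1}{2\beta}\|\nabla L_\lambda(\theta_t)\|^2$, which is the claim.

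There is essentially no obstacle here; the only point that requires a moment of care is the reduction from the GD update $(1-\lambda\eta)\theta_t - \eta \nabla L(\theta_t)$ to the form $\theta_t - \eta \nabla L_\lambda(\theta_t)$, so that smoothness of $L_\lambda$ (rather than $L$) can be used directly.
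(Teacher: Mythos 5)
Your argument is the standard descent-lemma proof, and the paper itself offers no proof of this statement (it is invoked as a well-known fact), so there is nothing to compare against on the paper's side; your reduction of the update $(1-\lambda\eta)\theta_t - \eta\nabla L(\theta_t)$ to $\theta_t - \eta\nabla L_\lambda(\theta_t)$ and the quadratic upper bound from $\beta$-smoothness are both correct.

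One point deserves attention, and you half-noticed it yourself. Your derivation gives
\begin{align*}
L_\lambda(\theta_t) - L_\lambda(\theta_{t+1}) \;\ge\; \eta\Bigl(1 - \tfrac{\beta\eta}{2}\Bigr)\|\nabla L_\lambda(\theta_t)\|^2 \;\ge\; \tfrac{\eta}{2}\,\|\nabla L_\lambda(\theta_t)\|^2
\end{align*}
for $\eta \le 1/\beta$, and the function $\eta \mapsto \eta(1-\beta\eta/2)$ is \emph{increasing} on $(0,1/\beta]$ with value $1/(2\beta)$ at the right endpoint. Hence the constant $1/(2\beta)$ claimed in the lemma is obtained only at $\eta = 1/\beta$; for $\eta < 1/\beta$ the stated inequality does not follow and is in fact false (take $L_\lambda(\theta)=\tfrac{\beta}{2}\theta^2$ and $\eta = 1/(2\beta)$: the left side is $\tfrac{\beta}{2}\theta_t^2$ while the actual decrease is $\tfrac{3\beta}{8}\theta_t^2$). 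By writing ``choosing $\eta = 1/\beta$'' you have proved the lemma only for that specific step size, which is strictly weaker than the statement ``for all $\eta \le 1/\beta$.'' The defect is in the lemma's statement rather than in your argument: the correct general constant is $\eta/2$, and that is all the downstream use requires (summing the telescoping decrease still gives $\sum_t \|\nabla L_\lambda(\theta_t)\|^2 < \infty$ and hence $\|\nabla L_\lambda(\theta_t)\| \to 0$). You should either state explicitly that you prove the claim as written only for $\eta=1/\beta$, or prove the corrected bound with constant $\eta/2$ and note that it suffices for the alignment lemma.
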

In particular, $\hat \th$ in this case satisfies the \emph{alignment} property (asymptotically aligns with the gradient):
\begin{lemma}[Alignment of GD]
Assume $L_\lambda$ is differentiable, $\nabla L_{\lambda}$ is $\beta$-Lipschitz w.r.t.\ $L2$ norm, and that the step size satisfies $\eta \leq 1/\beta$. Then, the limiting point $\hat \th = \th_{\infty}$ satisfies
$
\lambda \hat \theta = -\nabla L(\hat \theta)
$.
\end{lemma}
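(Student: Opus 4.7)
The plan is to combine the descent lemma with a telescoping argument to show that the gradient norms $\|\nabla L_\lambda(\theta_t)\|$ along the GD trajectory vanish, and then invoke continuity of $\nabla L_\lambda$ together with the assumed convergence $\theta_t \to \hat\theta$ to conclude that $\nabla L_\lambda(\hat\theta) = 0$. Rearranging this identity yields exactly the alignment condition $\lambda \hat\theta = -\nabla L(\hat\theta)$.

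More concretely, first I would apply the descent lemma (already stated) at each iterate $\theta_t$ for $t = 0, 1, \ldots, T-1$ and telescope:
\begin{align*}
\frac{1}{2\beta} \sum_{t=0}^{T-1} \|\nabla L_\lambda(\theta_t)\|^2 \leq L_\lambda(\theta_0) - L_\lambda(\theta_T) \leq L_\lambda(\theta_0),
\end{align*}
where the last inequality uses the fact that $L_\lambda$ is non-negative (being a sum of a squared loss and $(\lambda/2)\|\theta\|^2$). Since $L_\lambda(\theta_0) < \infty$ by assumption, the partial sums are uniformly bounded in $T$, so the infinite series $\sum_{t\geq 0}\|\nabla L_\lambda(\theta_t)\|^2$ converges. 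In particular, $\|\nabla L_\lambda(\theta_t)\| \to 0$ as $t \to \infty$.

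Next, I would use that $\nabla L_\lambda$ is $\beta$-Lipschitz, hence continuous. By the assumption that the GD sequence has a limit $\hat\theta = \theta_\infty$, continuity yields $\nabla L_\lambda(\theta_t) \to \nabla L_\lambda(\hat\theta)$, and combining this with the previous step gives $\nabla L_\lambda(\hat\theta) = 0$. Finally, expanding $\nabla L_\lambda(\hat\theta) = \nabla L(\hat\theta) + \lambda \hat\theta$ and rearranging delivers $\lambda \hat\theta = -\nabla L(\hat\theta)$, which is the alignment property.

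The only subtle point -- and what I would flag as the main obstacle if the statement were made more general -- is the existence of the limit $\theta_\infty$. Vanishing gradients along a sequence do not in general imply convergence of the iterates (they only imply convergence to the set of stationary points), so one would typically need an extra hypothesis such as strong convexity, a Polyak--Łojasiewicz condition, isolated minimizers, or boundedness of the level sets together with real-analyticity. Here this is sidestepped because the lemma explicitly takes $\hat\theta = \theta_\infty$ as a hypothesis, so the proof only needs to exploit continuity of $\nabla L_\lambda$ on the converged subsequence rather than establish convergence itself.
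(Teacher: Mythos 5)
Your proof is correct and follows essentially the same route as the paper: telescoping the descent lemma to bound $\sum_t \|\nabla L_\lambda(\th_t)\|^2$ by $L_\lambda(\th_0)$, concluding that the gradient norms vanish, and passing to the limit. You are in fact slightly more careful than the paper, which omits the explicit appeal to continuity of $\nabla L_\lambda$ when transferring the vanishing gradient from the iterates to the limit point $\hat\th$.
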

\begin{proof}
  Since $L_\lambda$ is differentiable and $\beta$-smooth with $\beta>0$, by the Descent Lemma,
summing over all iterates and using the fact that losses are non-negative,
\[
\frac{1}{2\beta} \sum_{t=0}^\infty \|\nabla L_\lambda(\th_t) \|^2 \le L_\lambda(\th_0) \;.
\]
Boundedness of the above sum implies that $\|\nabla L_\lambda(\th_t)\| \rightarrow 0$ as $t\rightarrow \infty$. Therefore, at the limit, we have the alignment of the weight and the gradient vectors, $\lambda \th = -\nabla L(\th)$. 
\end{proof}

\clearpage

\section{Details from \Cref{sec:results}}
\label{sec:app:results}
\subsection{Linear least squares}
\label{sec:linear-least-squares}
In case of ridge regression
\begin{align*}
  M(\hat \th)
  &= \pr{\frac1n \sumin x_i x_i\tp + \lambda I}^{-1} \pr{\frac1n \sumin x_i x_i\tp} \pr{\frac1n \sumin x_i x_i\tp + \lambda I}^{-1}\\
  &=
  \sigma^2 \pr{\frac1n \sumin x_i x_i\tp + \lambda I}^{-1} - \sigma^2 \lambda \pr{\frac1n \sumin x_i x_i\tp + \lambda I}^{-2}
\end{align*}
and so $\E\br{\|\nabla_{\hat \th} f(\xtest; \hat \th)\|_{M(\hat \th)}^2} \leq \|\xtest\|_{\Sigma^{-1}}^2$.
Now,
\begin{align*}
  \E[f(\xtest)]
  &=
  \xtest\tp \E[\hat \th]\\
  &= \xtest\tp \pr{\frac1n \sumin x_i x_i\tp + \lambda I}^{-1} \frac1n \sumin x_i \E[Y_i]\\
  &=
  \xtest\tp \pr{\frac1n \sumin x_i x_i\tp + \lambda I}^{-1} \frac1n \sumin x_i x_i\tp \th^{\star}\\
&=
  \xtest\tp \th^{\star} - \lambda \xtest\tp \pr{\frac1n \sumin x_i x_i\tp + \lambda I}^{-1} \th^{\star}~.
\end{align*}
Finally, note that
\begin{align*}
  \Var(\|\nabla_{\th} f(\xtest; \hat \th)\|_{M(\hat \th)}^2) \leq \E\br{\|\nabla_{\th} f(\xtest; \hat \th)\|_{M(\hat \th)}^4} -
  \E^2\br{\|\nabla_{\th} f(\xtest; \hat \th)\|_{M(\hat \th)}^2} = 0
\end{align*}
and applying \Cref{thm:ci-hp-ls} in this case, we have $c=0$ and $v = 0$.

\subsection{Relationship to Fisher information matrix and asymptotics}
Throughout this section, for symmetric matrices $A,B$ we will use $A\preceq B \iff B - A \succeq 0
$ to indicate that $\forall x ~:\;x^\top(B - A)x\ge0$.
First, we will need the following technical lemma.
\begin{lemma}
  \label{lem:E-of-inverse}
  Let $A \in \R^{p \times p}$ be an invertible matrix and let $X = \frac1n \sumin Z_i B_i$ be such that
  $Z_1, \ldots, Z_n \sim \cN(0, \sigma^2)$ and
  $B_1, \ldots, B_n \in \R^{p \times p}$ are fixed symmetric matrices with $\|B_i\|_{\mathrm{op}} \leq C$.
  Then, for any fixed $\alpha > 0$ having setting $\hat \lambda = \alpha + |\min\{0, \lambda_{\min}(X)\}|$, we have
  \begin{align*}
    \E[(A + X + \hat \lambda I)^{-1}] \preceq
    (A + \alpha I - \ve_n I)^{-1}
    +
    \frac{\sigma^2}{n} \, I
  \end{align*}
  and where
  \begin{align*}    
    \ve_n = 2 \sigma  \sqrt{\frac{C \ln\left(\frac{n \sqrt{2 p}}{\alpha}\right)}{n}}~.
  \end{align*}  
\end{lemma}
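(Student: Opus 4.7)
The plan is to decompose the expectation by splitting the sample space into a high-probability concentration event for $X$ and its low-probability complement, then exploit operator monotonicity of matrix inversion on each piece.

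First, I would introduce the good event $\Omega = \{\|X\|_{\op} \le \ve_n\}$ and apply a matrix Gaussian series concentration inequality (e.g., Tropp's tail bound) to $X = (1/n) \sumin Z_i B_i$. Since $Z_i \sim \cN(0,\sigma^2)$ are independent and the symmetric coefficient matrices satisfy $\|B_i\|_{\op} \le C$, this yields
\[
\P(\Omega^c) \le 2p \exp\pr{-\frac{n \ve_n^2}{2 \sigma^2 C^2}}~.
\]
Tuning $\ve_n$ to the value stated in the lemma ensures (up to absolute constants) that $\P(\Omega^c) \le \alpha\sigma^2/n$, so that $\P(\Omega^c)/\alpha \le \sigma^2/n$.

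Next, on $\Omega$ I would combine two facts: (i) by construction $\hat\lambda \ge \alpha$, and (ii) $X \succeq -\ve_n I$ whenever $\|X\|_{\op} \le \ve_n$. Adding these gives
\[
A + X + \hat\lambda I \succeq A + (\alpha - \ve_n) I~,
\]
and operator monotonicity of $M \mapsto M^{-1}$ on positive-definite matrices implies $(A + X + \hat\lambda I)^{-1} \preceq (A + (\alpha - \ve_n) I)^{-1}$, provided $\alpha > \ve_n$. On $\Omega^c$ I would instead rely on the universal bound $X + \hat\lambda I \succeq \alpha I$, which follows directly from $\hat\lambda = \alpha + |\min\{0, \lambda_{\min}(X)\}|$; together with $A \succeq 0$ this yields the crude but probabilistically suppressed bound $(A + X + \hat\lambda I)^{-1} \preceq (1/\alpha) I$.

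Combining via the indicator decomposition $\mathbf{1}_\Omega + \mathbf{1}_{\Omega^c} = 1$,
\begin{align*}
\E[(A + X + \hat\lambda I)^{-1}]
&\preceq (A + (\alpha - \ve_n) I)^{-1} \P(\Omega) + \frac{1}{\alpha} \P(\Omega^c) I \\
&\preceq (A + (\alpha - \ve_n) I)^{-1} + \frac{\sigma^2}{n} I~,
\end{align*}
where the second inequality is exactly the tuning of $\ve_n$ from the first step.

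The main technical hurdle is the matrix concentration bookkeeping: pinning down the constants in $\ve_n$ so that the residual probability on $\Omega^c$ is absorbed exactly into the claimed $\sigma^2/n$ slack, which is what dictates the specific logarithmic factor $\ln(n\sqrt{2p}/\alpha)$ in the statement. A secondary subtlety is that the argument tacitly uses $A \succeq 0$ (for the $\Omega^c$ bound) and $\alpha > \ve_n$ (so that $A + (\alpha - \ve_n) I$ is positive definite); both are harmless in the intended application, where $A$ plays the role of a Fisher information matrix and $\alpha$ is a user-chosen regularization floor.
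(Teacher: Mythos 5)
Your proof is correct and follows essentially the same route as the paper's: Tropp's Gaussian-series tail bound to define the good event, operator monotonicity of the matrix inverse on that event, and the deterministic bound $X + \hat\lambda I \succeq \alpha I$ (forced by the choice of $\hat\lambda$) to control the inverse on the complement. The only real difference is that you bound the bad-event contribution directly by $(1/\alpha)\,\P(\Omega^c)\, I$, whereas the paper routes it through Cauchy--Schwarz to get a factor $\sqrt{\P(\neg\cE)}$ instead --- your version is slightly tighter and simpler --- and you correctly flag the tacit requirements $A \succeq 0$ and $\alpha > \ve_n$ that the lemma statement (which only assumes $A$ invertible) leaves implicit and that the paper's proof also relies on.
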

\begin{proof}
  We will use a concentration inequality spectral norms of of Gaussian series, which says that
\begin{theorem}[{\citet[Theorem 4.6.1]{tropp2015introduction}}]
  \label{eq:tail-bound-gaussian-series}  
Under conditions of the lemma on $Z_i, B_i$, for any $t > 0$,
\begin{align*}
  \P\cbr{\left\| \frac1n \sumin Z_i B_i \right\|_{\mathrm{op}} \geq t} \leq p \exp\pr{- \frac{n t^2}{2 \sigma^2 C}}~.
\end{align*}
\end{theorem}

  Define event $\cE = \{\|X\|_{\op} \leq t\}$ for some $t > 0$ to be determined later.
  Then,
  \begin{align*}
    \E[(A_{\lambda} + X)^{-1}]
    &=
    \E[(A_{\lambda} + X)^{-1} \ind_{\cE}]
    +
      \E[(A_{\lambda} + X)^{-1} \ind_{\neg \cE}]\\
    &\preceq
      \E[(A_{\lambda} - t I)^{-1} \ind_{\cE}]
    +
      \sqrt{\E[\|(A_{\lambda} + X)^{-1}\|_{\op}^2]} \sqrt{\P\cbr{\neg \cE}} \, I\\
    &\preceq
      \E[(A_{\lambda} - t I)^{-1}]
      +
      \sqrt{\E\br{\frac{1}{|\lambda + \lambda_{\min}(X)|^2}}} \, \sqrt{2 p} \exp\pr{- \frac{n t^2}{4 \sigma^2 C}} \, I
  \end{align*}
where we used a Cauchy-Schwartz inequality and a tail bound in \cref{eq:tail-bound-gaussian-series}.
Now we will set $\lambda$ to depend on $\lambda_{\min}(X)$ to keep the inverse well-defined in a deterministic sense.
  Set $\lambda = \hat \lambda = \alpha + |\min\{0, \lambda_{\min}(X)\}|$ where $\alpha > 0$ is a free variable to be set later.
  In this case
  \begin{align*}
    \E[(A_{\hat \lambda} + X)^{-1}]
    \preceq
      \E[(A_{\hat \lambda} - t I)^{-1}]
      +
      \frac{1}{\alpha} \, \sqrt{2 p} \exp\pr{- \frac{n t^2}{4 \sigma^2 C}} \, I
  \end{align*}

Putting all together and choosing
  \begin{align*}
    t = 2 \sigma  \sqrt{\frac{C \ln\left(\frac{n \sqrt{2 p}}{\alpha}\right)}{n}}
  \end{align*}
  we get the statement.
\end{proof}
An asymptotic bound presented in \Cref{para:fisher} is obtained from the following \Cref{prop:almost-fisher} assuming boundedness of $\|\nabla f\|, \|\nabla^2 f\|_{\op}$ at $(\th^{\star}, \xtest)$.
\begin{proposition}
  \label{prop:almost-fisher}
  Assume that $\max_i \|\nabla_{\th}^2 f(x_i; \th^{\star})\|_{\op} \leq C$.
  Then, there exists a data-dependent tuning of regularization parameter $\hat \lambda$ such that
  \begin{align*}
    \E[M(\th^{\star})]
    \preceq
    \cI(\th^{\star})^{-1}
+
    \frac{\sigma^2}{n} \, I~.
  \end{align*}
\end{proposition}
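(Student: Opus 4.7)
The square-loss Hessian of each term is
\begin{align*}
\nabla_\th^2 \ell_i(\th) = \nabla_\th f(x_i;\th)\,\nabla_\th f(x_i;\th)^\top + \bigl(f(x_i;\th) - Y_i\bigr)\,\nabla_\th^2 f(x_i;\th),
\end{align*}
so at $\th=\th^\star$ the residuals satisfy $f(x_i;\th^\star)-Y_i = -Z_i$ and the empirical Hessian decomposes as
\begin{align*}
\nabla^2 L(\th^\star) = A + X, \quad A := \cI(\th^\star) = \tfrac1n\sum_{i=1}^n \nabla_\th f(x_i;\th^\star)\nabla_\th f(x_i;\th^\star)^\top, \quad X := -\tfrac1n\sum_{i=1}^n Z_i\,\nabla_\th^2 f(x_i;\th^\star).
\end{align*}
Crucially, the outer-product factor in the middle of $M(\th^\star)$ is also exactly $A$, because the inputs are fixed and randomness enters $\nabla^2 L$ only through the $Z_i$-linear term. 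Writing $B := A + X + \hat\lambda I = \nabla^2 L(\th^\star) + \hat\lambda I$, this collapses $M(\th^\star)$ to the clean form $M(\th^\star) = B^{-1} A B^{-1}$. Since each $\nabla_\th^2 f(x_i;\th^\star)$ is symmetric with operator norm bounded by $C$, the matrix $X$ fits the Gaussian-series hypothesis of \Cref{lem:E-of-inverse}.

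\textbf{Sandwich step.} I choose $\hat\lambda$ in the form prescribed by \Cref{lem:E-of-inverse}, namely $\hat\lambda = \alpha + |\min\{0,\lambda_{\min}(X)\}|$ for a parameter $\alpha>0$ to be fixed later. This forces $X + \hat\lambda I \succeq \alpha I$ almost surely, and hence $B \succeq A$. PSD monotonicity under congruence by $B^{-1}$ then yields
\begin{align*}
M(\th^\star) \;=\; B^{-1} A B^{-1} \;\preceq\; B^{-1} B B^{-1} \;=\; B^{-1} \;=\; (A + X + \hat\lambda I)^{-1} \qquad \text{a.s.}
\end{align*}

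\textbf{Applying \Cref{lem:E-of-inverse} and tuning $\alpha$.} Taking expectation and invoking the lemma,
\begin{align*}
\E[M(\th^\star)] \;\preceq\; \E\bigl[(A + X + \hat\lambda I)^{-1}\bigr] \;\preceq\; \bigl(A + (\alpha - \ve_n) I\bigr)^{-1} + \tfrac{\sigma^2}{n} I.
\end{align*}
Since $A \succeq 0$, we have $(A + (\alpha - \ve_n) I)^{-1} \preceq A^{-1}$ whenever $\alpha \ge \ve_n$. Because $\ve_n = 2\sigma\sqrt{C\ln(n\sqrt{2p}/\alpha)/n}$ depends on $\alpha$ only logarithmically, a choice such as $\alpha = 4\sigma\sqrt{C\ln(n\sqrt{2p})/n}$ solves $\alpha \ge \ve_n(\alpha)$ for all $n$ large enough, and the bound reduces to $\cI(\th^\star)^{-1} + (\sigma^2/n) I$, which is the conclusion.

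\textbf{Main obstacles.} Two points require care. First, the sandwich $B^{-1} A B^{-1} \preceq B^{-1}$ genuinely needs $A \preceq B$, and it is precisely this requirement that forces the data-dependent shift $|\min\{0,\lambda_{\min}(X)\}|$ inside $\hat\lambda$; without it the statement would not even be well-posed when $X$ has negative eigenvalues. Second, one must close the implicit inequality $\alpha \ge \ve_n(\alpha)$ with an admissible choice of $\alpha$; this is routine because of the logarithmic dependence, but should be stated explicitly. The fact that $\hat\lambda$ depends on the noise through $\lambda_{\min}(X)$ is exactly the "data-dependent tuning" allowed by the proposition.
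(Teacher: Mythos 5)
Your proof is correct and follows essentially the same route as the paper: the same decomposition $\nabla^2 L(\th^\star) = \cI(\th^\star) + X$, the same sandwich $B^{-1}\cI(\th^\star)B^{-1} \preceq B^{-1}$ enabled by the shift $\hat\lambda = \alpha + |\min\{0,\lambda_{\min}(X)\}|$, the same appeal to \Cref{lem:E-of-inverse}, and the same final tuning $\alpha \ge \ve_n(\alpha)$. The only (immaterial) difference is that the paper closes the implicit inequality in $\alpha$ explicitly via a Lambert-function bound valid for all $n$, whereas you settle for a choice valid for $n$ large enough.
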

\begin{proof}
  Note that
  \begin{align*}
    \nabla^2 L(\th^{\star}) &= \cI(\th^{\star}) + X~, \qquad \text{where}\\
    \cI(\th^{\star}) &= \frac1n \sumin \nabla_{\th} f(x_i; \th^{\star}) \nabla_{\th} f(x_i; \th^{\star})\tp~,
                       \qquad X = \frac1n \sumin Z_i \nabla_{\th}^2 f(x_i; \th^{\star})
  \end{align*}
  and so
  \begin{align*}
    \E[M(\th^{\star})]
    &= \E\br{(\nabla^2 L(\th^{\star}) + \lambda I)^{-1} \cI(\th^{\star}) (\nabla^2 L(\th^{\star}) + \lambda I)^{-1}}\\
    &= \E\br{(\nabla^2 L(\th^{\star}) + \lambda I)^{-1}}
    - \E\br{(\nabla^2 L(\th^{\star}) + \lambda I)^{-1} \pr{X + \lambda I}
      (\nabla^2 L(\th^{\star}) + \lambda I)^{-1}
        }\\
    &\preceq
      \E\br{(\nabla^2 L(\th^{\star}) + \hat \lambda I)^{-1}}
  \end{align*}
where we set $\lambda = \hat{\lambda} = \alpha + |\min\{0, \lmin(X)\}|$ for a free parameter $\alpha > 0$ which ensures that $X + \lambda I $ is \ac{PSD}.
Finally applying \Cref{lem:E-of-inverse} we get
  \begin{align*}
    \E[M(\th^{\star})]
    \preceq
    (\cI(\th^{\star}) + \alpha I - \ve_n I)^{-1}
+
    \frac{\sigma^2}{n} \, I
  \end{align*}
  where $\ve_n$ is given in \Cref{lem:E-of-inverse}.
  But now let us tune $\alpha$ to satisfy
  \begin{align*}
    \alpha > \ve_n = 2 \sigma  \sqrt{\frac{C \ln\left(\frac{n \sqrt{2 p}}{\alpha}\right)}{n}}~.
  \end{align*}
  The solution to the above inequality involves the Lambert function and further upper-bounding it using~\citep[Theorem 2.3 with $y=1$]{HoorfarH08} we eventually have
  \begin{align*}
    \alpha > c \sqrt{\ln(b/\alpha)} \quad \implies \quad \alpha >  \sqrt{\frac{b^2}{2 b^2 / c^2 + 1}}
    \qquad (b,c > 0)~.
  \end{align*}
  Thus, we have
  \begin{align*}
    \alpha > \sqrt{\frac{2 n^2 p}{\frac{n^3 p}{C \sigma ^2}+1}}
  \end{align*}
  and so the complete setting of $\lambda$  that gives the statement is
  \begin{align*}
    \hat{\lambda} = \lambda = \sqrt{\frac{2 n^2 p}{\frac{n^3 p}{C \sigma ^2}+1}} + |\min\{0, \lmin(X)\}| > 0~.
  \end{align*}

\end{proof}
\begin{proposition}
  \label{prop:asymptotics}
  Assume that $\th \mapsto f(\xtest; \th)$ is continuously differentiable, and that $L_{\lambda}$ is twice continuously differentiable.
  Assume that the sequence $(\E[\|\nabla_{\th} f(\xtest; \hat \th_{n})\|^2_{M(\hat \th_n)}])_{n=1}^{\infty}$ converges.
  Then, almost surely
\begin{align*}  
  \limsup_{n \to \infty} \frac{\sqrt{n}}{\ln(n)} \, \abs{f(\xtest; \hat \th_n) - \E[f(\xtest; \hat \th_n)]}
  \leq
  \sigma \pi \sqrt{\E \|\nabla_{\th} f(\xtest; \hat \th_{\infty})\|_{M(\hat \th_{\infty})}^2}
\end{align*}
\end{proposition}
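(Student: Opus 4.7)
The strategy is a standard Borel--Cantelli lifting of the finite-sample high-probability bound in Theorem~\ref{thm:ci-hp-ls} to an almost sure statement, combined with a continuity argument to identify the limit of the finite-$n$ weighted norms with the expectation evaluated at $\hat\th_{\infty}$.

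The first step is to apply Theorem~\ref{thm:ci-hp-ls} with a carefully chosen confidence parameter $\delta = \delta_n$ per sample size. To obtain the normalization $\sqrt{n}/\ln(n)$ on the leading term, I want $\ln(2/\delta_n) \sim \ln^2(n)$; concretely, taking $\delta_n = 2\exp(-2\ln^2 n)$ yields $\ln(2/\delta_n) = 2\ln^2(n)$ and, crucially, $\sum_n \delta_n < \infty$ since $\exp(-2\ln^2 n)$ decays faster than any polynomial. Writing $E_n := \E[\|\nabla_\th f(\xtest; \hat\th_n)\|^2_{M(\hat\th_n)}]$ and assuming the sub-gamma parameters $(v_n, c_n)$ are uniformly bounded in $n$, Theorem~\ref{thm:ci-hp-ls} then gives, with probability at least $1-\delta_n$,
\begin{align*}
|\Delta(\xtest; \hat\th_n)| \le \sigma\pi \, \frac{\ln n}{\sqrt{n}} \, \sqrt{E_n} + O\!\pr{\frac{\ln^2 n}{n}}.
\end{align*}
By Borel--Cantelli, almost surely this inequality holds for all but finitely many $n$. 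Multiplying both sides by $\sqrt{n}/\ln(n)$ and taking $\limsup$,
\begin{align*}
\limsup_{n\to\infty} \frac{\sqrt{n}}{\ln(n)} \, |\Delta(\xtest;\hat\th_n)| \,\le\, \sigma\pi \, \sqrt{\lim_n E_n},
\end{align*}
where the limit on the right exists by hypothesis and the remainder term contributes $O(\ln n/\sqrt{n}) \to 0$.

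It remains to identify $\lim_n E_n$ with $\E[\|\nabla_\th f(\xtest; \hat\th_\infty)\|^2_{M(\hat\th_\infty)}]$. Continuous differentiability of $f(\xtest; \cdot)$ and twice continuous differentiability of $L_\lambda$ imply that $\th \mapsto \|\nabla_\th f(\xtest;\th)\|^2_{M(\th)}$ is continuous wherever the regularized Hessian is invertible, which holds in a neighborhood of $\hat\th_\infty$ since $\hat\th_\infty$ is a local minimizer and we have $\lambda > 0$. Combined with the a.s.\ convergence $\hat\th_n \to \hat\th_\infty$, the continuous mapping theorem gives a.s.\ convergence of the random integrands. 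Assuming uniform integrability of the sequence $(\|\nabla_\th f(\xtest;\hat\th_n)\|^2_{M(\hat\th_n)})$ (which pairs with the assumed convergence of $E_n$ via Vitali's theorem), the limit of expectations equals the expectation of the almost-sure limit, finishing the proof.

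The hard part is the final identification step: Fatou alone yields only the wrong direction of inequality, $\E[\text{a.s.\ limit}] \le \liminf E_n$, so one cannot bypass a uniform-integrability/dominated-convergence argument. In practice this is the price paid for the hypothesis being stated only as convergence of $E_n$; strengthening it to $\sup_n \E[\|\nabla_\th f(\xtest;\hat\th_n)\|^{2+\epsilon}_{M(\hat\th_n)}] < \infty$ for some $\epsilon > 0$ gives Vitali equi-integrability and closes the gap cleanly. Everything else is a bookkeeping exercise around Theorem~\ref{thm:ci-hp-ls}.
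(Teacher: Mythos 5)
Your proof is correct and follows essentially the same route as the paper: apply \Cref{thm:ci-hp-ls} with a summable sequence $\delta_n$, invoke Borel--Cantelli, and identify the limit of the expected weighted norms; the paper simply takes $\delta_n = 2/n^2$ (which in fact yields the slightly stronger normalization $\sqrt{n/\ln n}$, of which the stated $\sqrt{n}/\ln n$ is a consequence), whereas your choice $\delta_n = 2e^{-2\ln^2 n}$ matches the stated rate exactly. Your remark about the final identification step is well taken --- the paper disposes of it by ``invoking DCT'' under the continuity hypotheses without exhibiting a dominating function, so your uniform-integrability/Vitali caveat makes explicit an assumption the paper leaves implicit.
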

\begin{proof}
Setting $\delta = 2/n^2$ in \Cref{thm:ci-hp-ls} we have
\begin{align*}
  \sum_{n=1}^{\infty} \P\cbr{
  \sqrt{\frac{n}{2 \sigma^2 \ln(n)}}
  |\Delta(\xtest; \hat \th_n)|
  >
  \sqrt{\frac{\pi^2}{2} \, \E[\|\nabla_{\th} f(\xtest; \hat \th_{n})\|^2_{M(\hat \th_n)}]}
  +
  c_3 \, \sqrt{\frac{2 \ln(n)}{n}}
  }
  \leq
  \frac{\pi^2}{3}
\end{align*}
which by the Borel-Cantelli lemma gives
\begin{align*}
  \P\cbr{
  \limsup_{n \to \infty}
  \sqrt{\frac{n}{2 \sigma^2 \ln(n)}} \,
  |\Delta(\xtest; \hat \th_n)|
  >
  \sqrt{\frac{\pi^2}{2} \, \lim_{n \to \infty} \E[\|\nabla_{\th} f(\xtest; \hat \th_{n})\|^2_{M(\hat \th_n)}]}
  }
  = 0
\end{align*}
where we assumed that the sequence $(\E[\|\nabla_{\th} f(\xtest; \hat \th_{n})\|^2_{M(\hat \th_n)}])_{n=1}^{\infty}$ converges.
Finally, invoking DCT, and assuming that $f$ is continuously differentiable, and that $L_{\lambda}$ is twice continuously differentiable,
\begin{align*}
  \lim_{n \to \infty} \E[\|\nabla_{\th} f(\xtest; \hat \th_{n})\|^2_{M(\hat \th_n)}]
  =
  \E[\|\nabla_{\th} f(\xtest; \hat \th_{\infty})\|^2_{M(\hat \th_\infty)}]~.
\end{align*}
\end{proof}

\subsection{Uniform bound over test inputs on the unit sphere}
\label{sec:uniform-bound}
Recall that $\texttt{bound}(\ln(1/\delta))$ is the bound on $\Delta(\xtest; \hat \th)$ given in \Cref{thm:ci-hp-ls}, as a function of $\ln(1/\delta)$ term.
Then,
\begin{align*}
  \P\cbr{
  \max_{x \in \mathbb{S}^{d-1}} |\Delta(x; \hat \th)| >
  \texttt{bound}\pr{d \, \ln\pr{\frac{3 n \Lip(\Delta)}{\delta}}}
  +
  \frac1n
  } \leq \delta
\end{align*}
where $\Lip(\Delta)$ is a Lipschitz constant of the gap $x \mapsto \Delta(x; \hat \th)$.
\begin{proof}
  Let $\cC_{\ve}$ be an $\ve$-cover of $\mathbb{S}^{d-1}$.
It is well-known that $|\cC_{\ve}| \leq (3/\ve)^d$.
Then, for all $\ve > 0$,
\begin{align*}
  \max_{x \in \mathbb{S}^{d-1}} \Delta(x)
  &=
    \max_{x \in \mathbb{S}^{d-1}} \min_{y \in \cC_{\ve}}\cbr{ \Delta(x) - \Delta(y) + \Delta(y) }\\
  &\leq
    \max_{x \in \mathbb{S}^{d-1}} \min_{y \in \cC_{\ve}} \Lip(\Delta) \|x - y\|
    +
    \max_{y \in \cC_{\ve}} \Delta(y)\\
  &\leq
    \cbr{ \Lip(\Delta) \, \ve + \texttt{bound}\pr{\ln\pr{\frac{|\cC_{\ve}|}{\delta}}} }\\
  &\leq
    \cbr{\Lip(\Delta) \, \ve + \texttt{bound}\pr{d \, \ln\pr{\frac{3}{\ve \delta}}} }\\
  &\leq
    \frac1n + \texttt{bound}\pr{d \, \ln\pr{\frac{3 n \Lip(\Delta)}{\delta}}}~.
\end{align*}
where we (lazily) set $\ve = 1/(n \Lip(\Delta))$.
\end{proof}
The following is required for our example of a uniform bound when the predictor is a multilayer neural network.
Eventually, the proof involving a covering number argument is given at the end of this section.
\paragraph{Lipschitz constant of a neural network}
Neural networks defined in \cref{eq:fcn} can conveniently be written as
\begin{align}
  \label{eq:nn-as-prod}
  h_k = D_k W_k D_{k-1} W_{k-1} \cdots D_1 W_1 x,
\end{align}
where $D_k$ is a diagonal matrix defined as $D_k = \diag(\sigma'(W_k h_{k-1}))$, and both $(D, h)$ depend implicitly on the input $x$.  
Then, the derivative with respect to a weight matrix has a form:
\begin{align*}
  \frac{\diff f(x; \th)}{\diff W_k}
  &=
    (W_K D_{K-1} \cdots W_{k+1} D_k)\tp (h_{k-1})\tp~.
\end{align*}
Now we give a bound on the Lipschitz constant of a neural network.
\begin{lemma}
  \label{lem:fcn-lip}
  Assume that $f(x; \hat \th)$ is a $K$-layer neural network as in
  \cref{eq:fcn} with 
activation $a : \R \to \R$ such that
  $\|a'\|_{\infty} \leq 1$.
  Then, for any parameter vector $\th$,
  \begin{align*}
    \Lip(f(\cdot \,; \th))
    \leq
    \pr{\frac{L_{\lambda}(\th)}{\lambda K}}^{\frac{K}{2}}~.
  \end{align*}
  Moreover, if $\th_t$ is a parameter vector found by \ac{GD}
  \begin{align*}
    \Lip(f(\cdot \,; \th_t))
    \leq
    \pr{\frac{L_{\lambda}(\th_0)}{\lambda K}}^{\frac{K}{2}}~.
  \end{align*}
\end{lemma}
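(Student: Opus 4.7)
The plan is to reduce $\Lip(f(\cdot\,;\th))$ to a product of weight-matrix operator norms, then use AM--GM together with the $\ell^2$ regularization in $L_\lambda$ to bundle these norms into the squared parameter norm, and finally appeal to monotonicity of $L_\lambda$ along the GD trajectory for the second claim.

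First I would bound the pointwise Jacobian using the product representation \cref{eq:nn-as-prod}. At any $x$ where the network is differentiable (a full-measure set, even for piecewise-linear activations such as ReLU) the Jacobian equals $W_K D_{K-1} W_{K-1}\cdots D_1 W_1$, and each diagonal matrix $D_k$ carries entries $a'(\,\cdot\,)$ of absolute value at most one by the assumption $\|a'\|_\infty \leq 1$. Hence $\|D_k\|_\op \leq 1$ pointwise, and submultiplicativity of the operator norm yields
\begin{align*}
\|\nabla_x f(x;\th)\|_\op \leq \prod_{k=1}^K \|W_k\|_\op \leq \prod_{k=1}^K \|W_k\|_F.
\end{align*}
Integrating the Jacobian along the line segment joining any two inputs would then give the crude but convenient estimate $\Lip(f(\cdot\,;\th)) \leq \prod_{k=1}^K \|W_k\|_F$.

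Next I would symmetrize via AM--GM, which applied to the $K$ non-negative quantities $\|W_k\|_F^2$ gives
\begin{align*}
\prod_{k=1}^K \|W_k\|_F^2 \leq \pr{\frac{1}{K}\sum_{k=1}^K \|W_k\|_F^2}^K = \pr{\frac{\|\th\|^2}{K}}^K,
\end{align*}
since $\|\th\|^2 = \sum_k \|W_k\|_F^2$ under the vectorization convention for $\th$. Because the training loss is non-negative, the regularizer in $L_\lambda$ yields $\|\th\|^2 \leq \frac{2}{\lambda}\,L_\lambda(\th)$, and stringing the three estimates together produces the first inequality of the lemma (up to an absolute factor $2^{K/2}$ that can either be absorbed into the downstream constant or removed by reading the regularizer as $\lambda\|\th\|^2$ rather than $\frac{\lambda}{2}\|\th\|^2$).

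For the GD statement, the descent lemma recalled in \Cref{sec:app:preliminaries} gives $L_\lambda(\th_{t+1}) \leq L_\lambda(\th_t)$ whenever $\eta \leq 1/\beta$, so $L_\lambda(\th_t) \leq L_\lambda(\th_0)$ for every $t$; substituting this monotonicity into the parameter-norm bound used above immediately yields the GD version. I expect the only non-routine point to be justifying the Jacobian identity at non-smooth activations such as ReLU; this is handled by almost-everywhere differentiability combined with the fundamental theorem of calculus along line segments, a standard device for Lipschitz-continuous functions built from Lipschitz components.
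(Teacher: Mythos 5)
Your proposal is correct and follows essentially the same route as the paper's own proof: bound the input Jacobian by $\prod_k \|W_k\|_{\mathrm{op}} \le \prod_k \|W_k\|_F$, apply AM--GM to collapse the product into $\pr{\|\th\|^2/K}^{K/2}$, lower-bound the regularizer by the loss, and invoke the descent lemma for monotonicity along the GD trajectory. Your side remark about the factor $2^{K/2}$ is in fact a valid catch rather than a hedge: with the paper's normalization $L_\lambda(\th) = L(\th) + \frac{\lambda}{2}\|\th\|^2$ one only gets $\|\th\|^2 \le 2L_\lambda(\th)/\lambda$, so the bound should read $\pr{2L_\lambda(\th)/(\lambda K)}^{K/2}$ --- a slip the paper's own last display makes silently.
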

\begin{proof}
Focusing on the Lipschitz constant of a neural network $f$, we first compute
\begin{align*}
  \frac{\d h_k}{\d x}
  =
  a' (W_k h_{k-1}) W_k \frac{\d h_{k-1}}{\d x}
  =
  \cdots
  =
  \prod_{i=1}^k a' (W_i h_{i-1}) W_i~.
\end{align*}
Then, assuming that $a'$ is uniformly bounded by $1$,
\begin{align*}
  \left\|\frac{\d h_K}{\d x}\right\|
  \leq
  \prod_{i=1}^K \|W_k\|_{\mathrm{op}}~.
\end{align*}
Therefore,
 \begin{align*}
   \Lip(f(\cdot \,; \th))
   &\le \prod_{k=1}^K \|W_k \|_{\mathrm{op}}\\
     &= \left( \prod_{k=1}^K \|W_k \|_{\mathrm{op}}^2 \right)^{1/2} \\
    &\le \left( \frac{1}{K}\sum \|W_k \|_F^2 \right)^{K/2} = \left( \frac{\|\th\|_2^2}{K} \right)^{K/2} \tag{AM-GM inequality}\\
    &\le \left( \frac{L_\lambda(\th)}{\lambda K} \right)^{K/2} \;.
 \end{align*}

 If $\th_t$ is found by \ac{GD}, $t \mapsto L_{\lambda}(\th_t)$ is non-increasing (as one can see from the descent lemma),
 for all $t \geq 0$, and the second result follows.
\end{proof}
\paragraph{Uniform bounds through covering argument.}
Finally we control the Lipschitz constant of $\Delta$ as
  \begin{align*}
    \Lip(\Delta)
    &= \sup_{x \in \mathbb{S}^{d-1}} \|\nabla_x f(x; \hat \th) - \E \nabla_x f(x; \hat \th)\|\\
    &\leq \sup_{x \in \mathbb{S}^{d-1}}\|\nabla_x f(x; \hat \th)\| + \sup_{x \in \mathbb{S}^{d-1}}\|\nabla_x \E[f(x; \hat \th)]\|\\
    &= \Lip(f(x; \hat \th)) + \Lip(\E[f(x; \hat \th)])\\
    &\leq \pr{\frac{L_{\lambda}(\th_0)}{\lambda K}}^{\frac{K}{2}} + \E \pr{\frac{L_{\lambda}(\th_0)}{\lambda K}}^{\frac{K}{2}}\\
    &\leq 2 \pr{\frac{C}{\lambda K}}^{\frac{K}{2}}
  \end{align*}
  where the last inequality comes by \Cref{lem:fcn-lip}.
\clearpage
  \subsection{Algorithm pseudocode from \Cref{sec:alg}}
  \label{sec:app:alg}
\begin{algorithm}
\caption{Efficient computation of the weighted norm}
\label{alg:weighted-norm}
\begin{algorithmic}[1]
\Require Loss $L_{\lambda}$, parameter vector $\hat \th \in \R^p$, inputs $x_1, \ldots, x_n, \xtest \in \mathbb{R}^p$, error tolerance $\ve > 0$.
\Ensure Approximation of a weighted norm $\hat V \approx \|\nabla_\th f(\xtest; \hat \th)\|_{M(\hat \th)}^2$.

\State Compute $h \gets (\nabla^2 L(\hat \th) + \lambda I)^{-1} \nabla_\th f(\xtest; \hat \th)$ by running \Cref{alg:hinvp} with $v = \nabla_\th f(\xtest; \hat \th)$, intialization $h_0 = 0$, and error tolerance $\ve$.
\If{\textit{interpolation}}
\State Compute
$
  \hat V \gets \nabla f(x; \hat \th)\tp h
$
\Comment{$\cO(p)$}
\Else 
\State Compute
$
  \hat V \gets \frac1n \sumin (\nabla f(x_i; \hat \th)\tp h)^2
$
\Comment{$\cO(p n)$}
\EndIf
\State \Return $\hat V$
\end{algorithmic}
\end{algorithm}
\begin{algorithm}
\caption{Efficient Hessian-Inverse-Vector Product computation using autodiff and \ac{CG}}
\label{alg:hinvp}
\begin{algorithmic}[1]
\Require Loss $L_{\lambda}$, parameters $\hat \th \in \R^p$, target vector $v \in \mathbb{R}^p$, initialization $h_0 \in \R^p$, error tol. $\ve > 0$.
\Ensure Approximation $h \approx [\nabla^2 L_{\lambda}(\hat \th)]^{-1} v$.

\Function{HessianVectorProduct}{$L, \theta, z$}
\State {\color{blue} \# Computes the product $Hz$ without forming $H$}
    \State Compute gradient $g(\th) = \nabla L(\theta)$ \Comment{$\cO(C_L)$}
    \State Compute scalar $s(\th) = g(\th)\tp z$ \Comment{$\cO(p)$}
    \State Compute $Hz = \nabla s(\th) = \nabla_\th[ (\nabla L(\theta))\tp z]$ \Comment{Using autodiff $\cO(C_L)$, not $\cO(p^2)$}
    \State \Return $Hz$
\EndFunction
\vspace{0.5em} 

\State {\color{blue} \# Define the matrix-vector product operation $A(z)$ required by \ac{CG}:}
\State $A(z) := \Call{HessianVectorProduct}{L, \theta, z}$
\vspace{0.5em}

\State {\color{blue} \# Solve the linear system $H h = v$ for $h$ using \ac{CG}:}
\State $h \leftarrow \Call{ConjugateGradient}{A, v, h_0}$
\Comment{Requires $k = \Omega\pr{\sqrt{1 + \frac{1}{\lambda}} \ln(\frac{1}{\ve})}$ iterations}
\State \Return $h$

\end{algorithmic}
\end{algorithm}   
\clearpage
\section{Other omitted proofs}
\label{sec:omitted-proofs}
\subsection{Pointwise bound with polynomial dependence on $\delta$}
\label{sec:poly-rates}
\begin{theorem}
  Assume that $\hat \th$ is a local minimizer of $L_{\lambda}$ and that the test input $\xtest$ is fixed.
  Then, for any $\delta \in (0,1)$
  \begin{align*}
    \P\cbr{
    |f(\xtest; \hat \th) - \E[f(\xtest; \hat \th)]|
    >
    \sigma \sqrt{ \frac{1}{\delta n}
    \E\big[ \|\nabla_\th f(\xtest; \hat \th)\|_{M(\hat \th)}^2 \big]
    }
    } \leq \delta  
  \end{align*}
\end{theorem}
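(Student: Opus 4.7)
The plan is to replace the Pisier-type exponential moment bound (\Cref{lem:F-concentration}) used in the proof of \Cref{thm:ci-hp-ls} with the \emph{Gaussian Poincaré inequality}, and then finish with Chebyshev rather than Chernoff. Concretely, for any differentiable $F : \R^n \to \R$ and $S = (Z_1, \ldots, Z_n) \sim \cN(0, I_n)$, Poincaré tells us $\Var(F(S)) \leq \E[\|\nabla F(S)\|^2]$, with no sub-gamma or sub-gaussian assumption on $\|\nabla F(S)\|^2$ needed. This is precisely the ingredient we trade in for the weaker $1/\sqrt{\delta}$ dependence.

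First I would set $F(S) = f(\xtest; \hat \th)$, viewing $\hat \th$ as a function of the standardized noise vector $S$ (as in the proof of \Cref{thm:ci-hp}, where we write $Y_i = f(x_i; \th^{\star}) + \sigma Z_i$ with $Z_i \sim \cN(0,1)$ for the purpose of the gradient computation). Then I would invoke \Cref{lem:norm2norm} exactly as in the proof of \Cref{thm:ci-hp} to get the pointwise identity
\begin{align*}
  \|\nabla F(S)\|^2 \;=\; \frac{1}{n^2} \sumin \|\nabla_\th f(\hat \th\repi(Z_i))\|_{M_i(Z_i)}^2~.
\end{align*}
Since the $Z_i$'s are i.i.d., taking expectations and using that $G_i = -\sigma \nabla_\th f(x_i; \hat \th)$ in the least-squares model (producing an overall factor of $\sigma^2$ in $M_i$ relative to the $\sigma$-free matrix $M(\hat \th)$ defined in \Cref{sec:results}) yields
\begin{align*}
  \E[\|\nabla F(S)\|^2] \;=\; \frac{\sigma^2}{n}\,\E\bigl[\|\nabla_\th f(\xtest; \hat \th)\|^2_{M(\hat \th)}\bigr]~.
\end{align*}
This is the same simplification already carried out in the proof of \Cref{thm:ci-hp-ls}, just stopped one step earlier.

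Finally I would apply Chebyshev's inequality to $F(S)$: for any $t > 0$,
\begin{align*}
  \P\!\left\{|f(\xtest; \hat \th) - \E[f(\xtest; \hat \th)]| > t\right\}
  \;\leq\; \frac{\Var(F(S))}{t^2}
  \;\leq\; \frac{\sigma^2}{n\,t^2}\,\E\bigl[\|\nabla_\th f(\xtest; \hat \th)\|^2_{M(\hat \th)}\bigr]~.
\end{align*}
Setting the right-hand side equal to $\delta$ and solving for $t$ gives exactly the claimed threshold
$t = \sigma\sqrt{\E[\|\nabla_\th f(\xtest; \hat \th)\|^2_{M(\hat \th)}]/(\delta n)}$, which completes the proof.

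There is no real obstacle: once one recognizes that the sole role of the sub-gamma hypothesis in \Cref{thm:ci-hp-ls} was to control the MGF of $\|\nabla F(S)\|^2$ in Pisier's bound, Poincaré sidesteps it entirely at the cost of a second-moment rather than exponential inequality. The only bookkeeping item to watch is the $\sigma^2$ factor coming from the chain rule $G_i = -\sigma \nabla_\th f(x_i; \hat \th)$, which must be tracked through so that $M(\hat \th)$ in the final bound matches the normalization used in \Cref{thm:ci-hp-ls}.
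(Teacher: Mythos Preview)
Your proposal is correct and essentially follows the paper's own route: Gaussian Poincar\'e to bound the variance, \Cref{lem:norm2norm} to identify $\|\nabla F(S)\|^2$ with the weighted norm, and Chebyshev to finish. The only cosmetic difference is that the paper first passes through an Efron--Stein decomposition (\Cref{prop:cheb}) and then applies the one-dimensional Poincar\'e inequality coordinate-wise, whereas you apply the $n$-dimensional Poincar\'e inequality directly to $F(S)$; your version is slightly more economical but the two arguments are equivalent and yield the identical bound.
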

To prove this theorem, we start with a basic
concentration inequality which tells us the function of multiple random elements concentrates well around its mean when its conditional variances are small:
\begin{proposition}
  \label{prop:cheb}
  Let $S=(Z_1, \ldots, Z_n)$ be a tuple of independent random elements and let $S'$ be its independent copy.
  Then, for continuously differentiable $g : \cZ^n \to \R$ and for any $\delta \in (0, 1]$,
  \begin{align*}
    \P\cbr{|g(S) - \E[g(S)]| > \sqrt{\frac{1}{\delta} \sumin \E \Var(g(S) \mid S\deli)}} \leq \delta~.
  \end{align*}
\end{proposition}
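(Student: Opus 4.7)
The plan is to prove Proposition \ref{prop:cheb} by combining two very standard ingredients: Chebyshev's inequality applied to the random variable $g(S)$, and the Efron--Stein--type tensorization of variance, which expresses $\Var(g(S))$ as a sum of conditional variances with respect to each coordinate of $S$. The ``continuously differentiable'' assumption is not really needed for this particular statement (it is only convenient for applications where one wants to differentiate under the expectation), so I will essentially ignore it in the proof and only use measurability and square-integrability, which are implicit.

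\textbf{Step 1: Chebyshev on $g(S)$.} First I would apply the standard Chebyshev inequality to $g(S)$: for any $t > 0$,
\begin{align*}
  \P\{|g(S) - \E[g(S)]| > t\} \;\leq\; \frac{\Var(g(S))}{t^2}.
\end{align*}
Setting $t = \sqrt{\Var(g(S))/\delta}$ makes the right-hand side equal to $\delta$, so it remains only to upper bound $\Var(g(S))$ by the sum $\sum_{i=1}^n \E[\Var(g(S) \mid S^{\setminus i})]$ that appears in the statement.

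\textbf{Step 2: Tensorization of variance (Efron--Stein).} The key identity is $\Var(g(S)) \le \sum_{i=1}^n \E\bigl[\Var(g(S) \mid S^{\setminus i})\bigr]$. I would prove this by the standard telescoping argument: introduce the martingale differences $D_i = \E[g(S) \mid Z_1,\ldots,Z_i] - \E[g(S) \mid Z_1,\ldots,Z_{i-1}]$, so that $g(S) - \E[g(S)] = \sum_{i=1}^n D_i$ and by orthogonality of martingale differences $\Var(g(S)) = \sum_{i=1}^n \E[D_i^2]$. Then, for each $i$, using Jensen's inequality on the inner conditional expectation and the fact that $Z_i$ is independent of $S^{\setminus i}$, one shows $\E[D_i^2] \le \E[\Var(g(S) \mid S^{\setminus i})]$, where the inner variance is taken only with respect to $Z_i$ with the other coordinates held fixed. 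Combining over $i$ yields the Efron--Stein bound.

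\textbf{Step 3: Combine.} Plugging the Efron--Stein bound into the Chebyshev estimate from Step~1 gives
\begin{align*}
  \P\!\left\{|g(S) - \E[g(S)]| > \sqrt{\tfrac{1}{\delta}\sum_{i=1}^n \E[\Var(g(S)\mid S^{\setminus i})]}\right\} \;\le\; \delta,
\end{align*}
which is exactly the claim.

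\textbf{Main obstacle.} There is no real obstacle here: both ingredients are textbook (see e.g.\ Chapter~3 of \citet{boucheron2013concentration}). The only mild subtlety is the tensorization step, where one must be careful that the conditional variance $\Var(g(S) \mid S^{\setminus i})$ is taken with the other coordinates held fixed and the expectation is over the independent copy of $Z_i$, which exactly matches the quantity used elsewhere in the paper (via the $S'$ setup in the statement).
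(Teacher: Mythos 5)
Your proof is correct and follows essentially the same route as the paper: Chebyshev's inequality combined with the Efron--Stein tensorization $\Var(g(S)) \le \sum_{i=1}^n \E[\Var(g(S)\mid S^{\backslash i})]$. The only cosmetic difference is that you re-derive Efron--Stein via martingale differences while the paper cites it directly and phrases the conditional variance through the symmetrized identity $\Var(X) = \tfrac12\E[(X-Y)^2]$; you are also right that continuous differentiability is not needed for this step.
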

\begin{proof}
  For i.i.d. random variables $X,Y$ we have $\Var(X) = \frac12 \E[(X-Y)^2]$.
  Thus,
  \begin{align*}
    \Var(g(S) \mid S\deli)
    =
    \E\br{(g(S) - \E[g(S) \mid S\deli])^2 \bmid S\deli}
    =
    \frac12 \E\br{\pr{g(S) - g(S\repi)}^2 \bmid S\deli}
  \end{align*}
  that is where we took expectation only with respect to $Z_i, Z_i'$.
  On the other hand, by the Efron-Stein inequality~\cite{boucheron2013concentration}
  \begin{align*}
    \Var(g(S)) \leq \frac12 \sumin \E[(g(S) - g(S\repi))^2]
    = \sumin \E \Var(g(S) \mid S\deli)~.
  \end{align*}
  Finally, the application of Chebyshev's inequality completes the proof.
\end{proof}
We first make a connection between conditional variances and gradients through the following key general inequality~\citep{boucheron2013concentration}:
\begin{lemma}[Gaussian Poincar\'e inequality]
  \label{lem:poincare}
  Let $Z$ be a Gaussian random vector, distributed according to $\cN(0, I_d)$.
  Let $f : \R^d \to \R$ be any continuously differentiable function.
  Then,
  \begin{align*}
    \Var(f(Z)) \leq \E[\|\nabla f(Z)\|^2]~.
  \end{align*}
\end{lemma}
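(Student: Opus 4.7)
The plan is to prove the Gaussian Poincaré inequality by a two-step reduction: first tensorize the variance along the coordinates of $Z$ to reduce to a one-dimensional problem, then establish the one-dimensional inequality via expansion in the Hermite polynomial basis. Throughout I may assume $\E[\|\nabla f(Z)\|^2] < \infty$, since otherwise the bound is trivial.

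For the tensorization step, I would introduce the filtration $\cF_i = \sigma(Z_1, \ldots, Z_i)$ (with $\cF_0$ trivial) and use the martingale decomposition
\begin{equation*}
f(Z) - \E[f(Z)] = \sum_{i=1}^d \pr{\E[f(Z) \mid \cF_i] - \E[f(Z) \mid \cF_{i-1}]}.
\end{equation*}
The summands are orthogonal in $L^2$, so $\Var(f(Z))$ decomposes as the sum of their second moments. Since $\E[f \mid \cF_i] - \E[f \mid \cF_{i-1}]$ equals the conditional expectation over $Z_{i+1}, \ldots, Z_d$ of $f - \E_{Z_i}[f]$, Jensen's inequality upper bounds each second moment by $\E[\Var_{Z_i}(f)]$, yielding the tensorization $\Var(f(Z)) \leq \sum_{i=1}^d \E[\Var_{Z_i}(f)]$.

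For the one-dimensional case, I would work in the basis of Hermite polynomials $(H_k)_{k \geq 0}$, which are orthogonal under the standard Gaussian measure with $\E[H_k(Z)^2] = k!$ and satisfy the recursion $H_k'(x) = k H_{k-1}(x)$. Writing a 1D test function $g$ as $g = \sum_{k \geq 0} c_k H_k$ in $L^2$, one obtains $\Var(g(Z)) = \sum_{k \geq 1} k!\, c_k^2$ and, via termwise differentiation, $\E[g'(Z)^2] = \sum_{k \geq 1} k \cdot k! \, c_k^2$. Since $k \geq 1$ in every nonzero summand, the second series dominates the first term by term, so $\Var(g(Z)) \leq \E[g'(Z)^2]$. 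Applying this 1D bound inside each $\E[\Var_{Z_i}(f)]$ with the single-variable function $z \mapsto f(Z_1, \ldots, z, \ldots, Z_d)$, and summing, gives $\Var(f(Z)) \leq \sum_{i=1}^d \E[(\partial_i f(Z))^2] = \E[\|\nabla f(Z)\|^2]$.

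The main obstacle is the regularity issue at the Hermite step: the identity $g' = \sum_k k c_k H_{k-1}$ is immediate for polynomials but requires justification for general continuously differentiable $g$ with finite Dirichlet form $\E[g'(Z)^2]$. The standard remedy is a density argument --- prove the inequality first on polynomials (where all sums are finite) and then extend to the general case by approximating $g$ by its Hermite truncations, using lower semicontinuity of variance and Fatou's lemma on the gradient side to pass to the limit. A companion subtlety is verifying that continuous differentiability of $f$ survives fixing all but one coordinate, which is immediate, and that the Hermite coefficients of $\partial_i f$ are indeed the termwise derivatives of the coefficients of $f$ in the $i$-th variable, which follows from Gaussian integration by parts ($\E[Z_i H_k(Z_i) g] = \E[H_k(Z_i) \partial_i g] + k \E[H_{k-1}(Z_i) g]$).
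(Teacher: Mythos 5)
Your proof is correct. Note, however, that the paper does not actually prove this lemma: it is stated as a known result and attributed to \citet{boucheron2013concentration}, so there is no in-paper argument to compare against. Your route --- martingale/Efron--Stein tensorization of the variance down to dimension one, followed by the Hermite expansion $\Var(g(Z)) = \sum_{k\ge 1} k!\,c_k^2 \le \sum_{k\ge 1} k\cdot k!\,c_k^2 = \E[g'(Z)^2]$ --- is one of the two standard textbook proofs. The cited reference instead handles the one-dimensional case by approximating the Gaussian via a normalized sum of independent sign variables and invoking Efron--Stein together with the central limit theorem, which avoids Hermite polynomials entirely but obscures the equality cases; your spectral argument makes it transparent that equality holds exactly for affine $f$ and that the constant $1$ is sharp. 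The only point requiring care is the one you already flag: justifying termwise differentiation of the Hermite series for general $C^1$ functions with finite Dirichlet energy, which your density argument (prove for polynomials, pass to the limit using lower semicontinuity of the variance and Fatou on the gradient side, with Gaussian integration by parts identifying the coefficients of $\partial_i f$) resolves correctly. One minor omission: the lemma as stated does not assume $f(Z)\in L^2$, so strictly one should also note that $C^1$ regularity together with $\E[\|\nabla f(Z)\|^2]<\infty$ forces $\E[f(Z)^2]<\infty$ (or read the conclusion as vacuous otherwise); this is standard and does not affect the substance of your argument.
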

Combining this lemma with \Cref{prop:cheb} already establishes that concentration is controlled by the squares of the gradient norms.
All that is left to do is to analyze them for $f(\xtest; \hat \th)$, which is done in the following lemma.
Putting together \Cref{prop:cheb,lem:poincare,lem:norm2norm}
completes the proof.

\end{document}